%% file: anonymous-submission-latex-2026.tex
\documentclass[letterpaper]{article} 
\usepackage{aaai2026}  
\usepackage{times}  
\usepackage{helvet}  
\usepackage{courier}  
\usepackage[hyphens]{url}  
\usepackage{graphicx} 
\usepackage{natbib}  
\usepackage{caption} 
\usepackage{algorithm}
\usepackage{algorithmic}

\usepackage{newfloat}
\usepackage{listings}

\DeclareCaptionStyle{ruled}{
labelfont=normalfont,
labelsep=colon,
strut=off
} 

\floatstyle{ruled}
\newfloat{listing}{tb}{lst}{}
\floatname{listing}{Listing}

\usepackage{xcolor}
\usepackage{makecell}
\usepackage{pifont}
\usepackage{tcolorbox}
\usepackage{booktabs}
\usepackage{xspace}
\usepackage{outlines}
\usepackage{subfigure}
\usepackage{amsmath}
\usepackage{amssymb}
\usepackage{multirow}
\usepackage{enumitem}
\usepackage{amsthm}

\newtheorem{theorem}{Theorem}

\title{Unsupervised Graph Representation Learning with Complementary View Alignment}

\author{
Zengyi Wo\textsuperscript{\rm 1},
Shiyu Zhang\textsuperscript{\rm 1},
Qiyao Peng\textsuperscript{\rm 2},
Tianpeng Li\textsuperscript{\rm 2},
Xuan Guo\textsuperscript{\rm 2}
}

\affiliations{
\textsuperscript{\rm 1}Nanjing University, Nanjing, China\
\textsuperscript{\rm 2}Tianjin University, Tianjin, China\
}

\usepackage{bibentry}

\begin{document}

\maketitle

\begin{abstract}
Unsupervised graph representation learning aims to derive meaningful node embeddings by capturing both structural and attribute information without relying on labeled data. Existing methods, such as GAEs, have demonstrated effectiveness but typically rely on message-passing mechanisms that assume homophily, leading to performance degradation on heterophilous graphs, where connected nodes exhibit dissimilar features. This homophily bias results in the loss of critical high-frequency components that are essential for identifying heterophilous patterns. To address these challenges, we propose \textsc{AlignGAE}, a novel extension of \textit{MaskGAE} that preserves the full frequency spectrum through complementary view alignment. Our framework introduces a dual-encoder architecture that separately processes structural and attribute information, incorporates node positional encoding to approximate Neighborhood Identity Distribution (NID), and employs dual reconstruction tasks for both edges and node attributes. We further propose theoretically grounded NID alignment strategies that ensure semantic consistency across views while preserving their distinct characteristics. Through comprehensive spectral analysis, we demonstrate that \textsc{AlignGAE} achieves optimal representation properties when the alignment loss converges. Extensive experiments across 12 benchmark datasets validate our approach, showing that \textsc{AlignGAE} outperforms state-of-the-art methods by up to 18.7\% on heterophilous graphs in node classification, while maintaining competitive performance on homophilous graphs. Our results establish a new paradigm for frequency-aware graph representation learning.
\end{abstract}

\input{intro}

\input{related}
\input{Preliminary}
\input{methods}
\input{Experiments}
\input{conclusion}
\input{appendix}

\bibliography{aaai2026}

\input{ReproducibilityChecklist}

\end{document}

%% file: intro.tex
\section{Introduction}
\label{sec:introduction}

Graph-structured data is ubiquitous in many critical domains, such as social networks~\cite{kavanaugh2005community}, transportation systems~\cite{cui2019traffic}, and financial applications~\cite{cheng2022financial}. However, the inherent complexity of graph data, characterized by high-dimensionality and intricate relationships between entities, presents significant challenges for machine learning. Unsupervised Graph Representation Learning (UGRL) has emerged as a promising approach to address these challenges~\cite{hamilton2017representation}. UGRL methods learn low-dimensional node embeddings that preserve both structural and attribute information, without the need for labeled data. These embeddings enable a broad range of downstream tasks, making UGRL highly valuable in real-world applications.

Despite the progress in UGRL, capturing both homophilous and heterophilous patterns remains a difficult problem. In particular, heterophilous graphs, where nodes connected by edges have dissimilar attributes, are challenging to represent effectively. Traditional UGRL techniques often struggle in these scenarios, as they typically assume homophily—nodes that are connected tend to share similar attributes. This assumption leads to performance degradation when applied to heterophilous graphs, where connected nodes exhibit substantial differences.

Recent advancements, such as masked graph autoencoders, have utilized message-passing mechanisms to encode structural patterns through neighborhood aggregation~\cite{kipf2016semi, balcilar2021breaking, velivckovic2017graph}. These methods are based on graph signal processing, where graph signals are decomposed into low-frequency components that capture homophilous patterns and high-frequency components that are critical for heterophilous relationships. However, message-passing operations in these methods act as low-pass filters, emphasizing low-frequency components and suppressing high-frequency components~\cite{bo2021lowfrequencyinformationgraphconvolutional,zhu2020homophilygraphneuralnetworks}. This limitation compromises the effectiveness of these techniques in handling heterophilous graphs, where high-frequency components play a crucial role in distinguishing nodes with different attributes~\cite{morris2021weisfeilerlemanneuralhigherorder, pan2023homophilyreconstructingstructuregraphagnostic, lin2023multiviewgraphrepresentationlearning}.

To overcome these challenges, we introduce \textsc{AlignGAE}, a novel framework that integrates dual-view spectral alignment, allowing it to preserve the full frequency spectrum. Our approach utilizes two complementary views: a neighborhood view that captures structural patterns through message-passing and a node view that preserves attribute information via non-convolutional processing. These views are aligned using a NID matching technique, ensuring semantic consistency while maintaining the distinct characteristics of each view.

Theoretically, \textsc{AlignGAE} maximizes mutual information between the views, ensuring that they share consistent distributions. It effectively captures both low- and high-frequency signals, thanks to its dual-encoder architecture and dual reconstruction tasks. Extensive experiments on a variety of benchmark datasets demonstrate that \textsc{AlignGAE} outperforms state-of-the-art methods, particularly in the context of heterophilous graphs, while maintaining competitive performance on homophilous graphs.

In summary, our contributions are threefold: 
\begin{itemize}
    \item We establish a theoretical connection between spectral decomposition and representation quality, with NID serving as a crucial bridge.
    \item We introduce \textsc{AlignGAE}, a framework that extends masked graph autoencoders by incorporating dual-view spectral alignment to capture both homophilous and heterophilous patterns.
    \item We provide comprehensive experimental validation, showing that \textsc{AlignGAE} excels across a wide range of graph structures, especially in challenging heterophilous settings. This work pushes the boundaries of UGRL by overcoming the limitations of traditional message-passing methods through principled view alignment.
\end{itemize}

%% file: related.tex
\section{Related Work}
\label{sec:related_work}

This section reviews the key methodologies in unsupervised graph representation learning (UGRL), focusing on approaches that inform our spectral decomposition framework. We discuss graph autoencoders and graph contrastive learning methods, emphasizing their limitations and how \textsc{AlignGAE} builds upon these foundations by addressing critical gaps.

\subsection{Graph Autoencoders}
Graph autoencoders (GAEs) have been a central approach for unsupervised graph representation learning. GAEs work by encoding graphs into low-dimensional embeddings and reconstructing structural or attribute information, with early work like GAE and VGAE~\cite{kipf2016variational} setting the foundation through link prediction objectives. Later methods, such as GALA~\cite{park2019symmetric}, expanded this to feature reconstruction. However, while these methods made significant strides in preserving local structure, they often prioritize proximity-based relationships and fail to capture the global structural properties of graphs effectively~\cite{veličković2018deepgraphinfomax}. This limitation becomes particularly evident when dealing with heterophilous graphs, where connected nodes differ significantly in attributes, making simple proximity-based encoding insufficient. In an attempt to address these issues, recent approaches like MaskGAE~\cite{li2023whatsmaskunderstandingmasked} and GraphMAE~\cite{hou2022graphmae} introduced masked autoencoder frameworks, which utilize edge perturbation and structured masking to maintain topological integrity during self-supervision. However, these approaches tend to focus on low-frequency signals, often overlooking the high-frequency components crucial for heterophilous graph representation, as discussed in Section~\ref{subsec:spectral}. This gap limits their effectiveness on more complex graph structures, which is where \textsc{AlignGAE} provides a novel solution.

\subsection{Graph Contrastive Methods}
Graph contrastive learning (GCL) has emerged as a dominant paradigm for self-supervised learning of graph representations. GCL methods, like DGI~\cite{veličković2018deepgraphinfomax}, maximize mutual information between different views of the graph, using corrupted graph versions to train the model. Techniques like MVGRL~\cite{hassani2020mvgrl} and GRACE~\cite{GRACE} introduced various augmentations to improve robustness, such as edge dropping and feature masking. However, while these methods have achieved empirical success, they rely heavily on manually crafted augmentations whose effectiveness can vary significantly across different types of graphs, particularly in heterophilous graphs. The primary issue is that these methods, much like traditional message-passing approaches, are often limited by their spectral filtering properties, which disregard high-frequency components critical for representing heterophilous relationships~\cite{liu2023beyond}. 

\textsc{AlignGAE} extends the masked autoencoder framework by incorporating a dual-view architecture that preserves both low- and high-frequency signals through NID alignment. This enables the model to capture both homophilous and heterophilous patterns effectively without requiring task-specific adaptations. By overcoming the limitations of existing methods, \textsc{AlignGAE} provides a principled way to handle complex graph structures across both homophilous and heterophilous domains, significantly improving performance on challenging graph tasks.

%% file: Preliminary.tex
\section{Preliminary}
\label{sec:preliminary}

This section outlines the theoretical foundations underlying \textsc{AlignGAE}, focusing on key concepts such as the NID, spectral decomposition, and optimal representation conditions. These concepts are central to the design of our method, as detailed in Section Methodology. We provide a rigorous framework that exploits both homophilous and heterophilous signals through spectral decomposition, contrasting prior works that treat heterophilous graphs as problematic. The detailed proofs are included in the Appendix.

\noindent\textbf{Notation} We define the graph $\mathcal{G} = (\mathcal{V}, \mathcal{E})$, where $\mathcal{V} = \{v_1, \dots, v_n\}$ represents the set of nodes and $\mathcal{E} \subseteq \mathcal{V} \times \mathcal{V}$ denotes the set of edges. The number of nodes and edges is given by $|\mathcal{V}| = n$ and $|\mathcal{E}| = m$, respectively. The adjacency matrix $\mathbf{A} \in \{0,1\}^{n \times n}$ is defined such that $A_{ij} = 1$ if $(v_i, v_j) \in \mathcal{E}$, and $A_{ij} = 0$ otherwise. The degree matrix $\mathbf{D} \in \mathbb{R}^{n \times n}$ is diagonal, with $D_{ii} = \sum_j A_{ij}$. Let $\mathbf{X} \in \mathbb{R}^{n \times d}$ represent the node attribute matrix, where $\mathbf{x}_i \in \mathbb{R}^d$ is the feature vector of node $v_i$.

\noindent\textbf{Graph Signal Spectral Decomposition}
\label{spectral}
Consider a graph signal $\mathbf{x} \in \mathbb{R}^n$ defined on the nodes of $\mathcal{G}$. The normalized graph Laplacian is given by
\begin{equation}
\mathbf{L} = \mathbf{I} - \mathbf{D}^{-1/2} \mathbf{A} \mathbf{D}^{-1/2} = \mathbf{U} \mathbf{\Lambda} \mathbf{U}^\top,
\end{equation}
where $\mathbf{U} \in \mathbb{R}^{n \times n}$ contains the orthonormal eigenvectors $\{\mathbf{u}_k\}_{k=1}^n$, and $\mathbf{\Lambda} = \text{diag}(\lambda_1, \dots, \lambda_n)$ is the diagonal matrix of eigenvalues, where $0 = \lambda_1 \leq \lambda_2 \leq \dots \leq \lambda_n \leq 2$.

Any graph signal $\mathbf{x}$ has a unique spectral decomposition:
\begin{equation}
\mathbf{x} = \sum_{k=1}^n \hat{x}_k \mathbf{u}_k, \quad \text{where} \quad \hat{x}_k = \langle \mathbf{x}, \mathbf{u}_k \rangle.
\end{equation}
We divide the frequency spectrum at the threshold $\lambda^* = 1$, the midpoint of the eigenvalue range $[0, 2]$:
\begin{equation}
\begin{split}
\mathcal{F}_L &= \{k : \lambda_k \leq \lambda^*\}, \quad \text{(low-frequency components)}, \\
\mathcal{F}_H &= \{k : \lambda_k > \lambda^*\}, \quad \text{(high-frequency components)}.
\end{split}
\end{equation}
As established in~\cite{liu2023beyond}, homophilous signals are primarily captured in $\mathcal{F}_L$, while heterophilous signals are concentrated in $\mathcal{F}_H$. This spectral separation forms the basis for our approach.

Traditional message-passing graph neural networks (MPGNNs) act as low-pass filters, suppressing high-frequency components. This is represented in the spectral domain as:
\begin{equation}
\mathbf{H}^{(l)} = \mathbf{U} g(\mathbf{\Lambda}) \mathbf{U}^\top \mathbf{X},
\end{equation}
where $g(\lambda_k) \approx 0$ for $\lambda_k > \lambda^*$, effectively discarding high-frequency signals. Consequently, for heterophilous graphs, the reconstruction error due to spectral truncation is non-zero:
\begin{equation}
\|\mathbf{X} - \mathbf{U}_{\mathcal{F}_L} \mathbf{U}_{\mathcal{F}_L}^\top \mathbf{X}\|_F^2 > 0.
\end{equation}
This spectral incompleteness leads to the performance degradation of masked graph autoencoders on heterophilous graphs. To address this, \textsc{AlignGAE} employs a dual-view spectral decomposition that explicitly preserves both low- and high-frequency signals, as shown in Theorem~\ref{thm:optimal}.

\noindent\textbf{Neighborhood Identity Distribution} The NID for a node $v_i$ is the probability distribution over pairwise feature distances in its neighborhood:
\begin{equation}
p_i^g(\delta) = \frac{1}{|\mathcal{N}(i)|} \sum_{j \in \mathcal{N}(i)} \delta_D\left( \|\mathbf{x}_i - \mathbf{x}_j\|_2 - \delta \right),
\end{equation}
where $\delta_D(\cdot)$ denotes the Dirac delta function. Homophilous nodes exhibit a sharp NID peak near $\delta = 0$, while heterophilous nodes have broader, multimodal distributions.

\begin{theorem}
\label{thm:nid}
Let $\mathbf{P} \in \mathbb{R}^{n \times k}$ be the positional encoding derived from $r$ random walks per node. The empirical NID $\hat{p}_i(\delta)$ computed from random-walk trajectories approximates the true NID $p_i^g(\delta)$ with bounded total variation error:
\begin{equation}
\mathbb{E}_{\mathcal{W}_i} \left[ \| p_i^g(\delta) - \hat{p}_i(\delta) \|_{\mathrm{TV}} \right] \leq \mathcal{O}\left( \frac{1}{\sqrt{r}} \right).
\end{equation}
\end{theorem}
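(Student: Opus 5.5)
The plan is to treat the empirical NID as a Monte Carlo estimate of $p_i^g$. For each of the $r$ random walks started at $v_i$, I keep only its first step $w \in \mathcal{N}(i)$. For a simple random walk, $w$ is uniform on $\mathcal{N}(i)$, so its distance $\delta_w=\|\mathbf{x}_i-\mathbf{x}_w\|_2$ has law exactly $p_i^g$. The estimator is then
\begin{equation}
\hat{p}_i(\delta)=\frac{1}{r}\sum_{t=1}^{r}\delta_D(\delta-\delta_{w_t}).
\end{equation}
Here the $w_t$ are i.i.d.\ because the walks are independent. The positional encoding $\mathbf{P}$ enters only as the device that records these trajectories; it needs no further analysis.

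The key reduction is that both $p_i^g$ and $\hat{p}_i$ are supported on the finite set $S_i=\{\|\mathbf{x}_i-\mathbf{x}_j\|_2 : j\in\mathcal{N}(i)\}$. Because they are atomic, the TV distance between them is well defined. Write $q_s$ for the true mass of an atom $s\in S_i$ and $\hat q_s$ for its empirical frequency. Then
\begin{equation}
\|p_i^g-\hat p_i\|_{\mathrm{TV}}=\tfrac12\sum_{s\in S_i}|q_s-\hat q_s|.
\end{equation}
Each $r\hat q_s$ is $\mathrm{Binomial}(r,q_s)$. Jensen gives
\begin{equation}
\mathbb{E}|q_s-\hat q_s|\le\sqrt{q_s(1-q_s)/r}.
\end{equation}
Summing and applying Cauchy--Schwarz over the at most $d_i=|\mathcal{N}(i)|$ atoms yields
\begin{equation}
\mathbb{E}_{\mathcal{W}_i}\|p_i^g-\hat p_i\|_{\mathrm{TV}}\le\frac{1}{2\sqrt r}\sum_s\sqrt{q_s}\le\frac{1}{2}\sqrt{\frac{|S_i|}{r}}\le \frac12\sqrt{\frac{d_i}{r}}.
\end{equation}
This is $\mathcal{O}(1/\sqrt r)$ with a constant depending only on the (fixed) local degree.

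The main obstacle is interpretive rather than computational. If the paper's $\hat p_i$ instead pools every node visited along walks of length $\ell>1$, the sampled distribution is the $\ell$-step visitation law, not uniform over $\mathcal{N}(i)$. The TV error would then contain a non-vanishing bias term, and the bound would fail as stated. I would resolve this by restricting to first-step neighbors, or equivalently to nodes visited at step one, which is what makes the estimator unbiased. Alternatively, I could reweight visits by importance weights $1/d_j$ and restrict to $\mathcal{N}(i)$, which costs only a constant factor in variance.

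A second subtlety is that TV between continuous-looking Dirac mixtures is infinite unless the atoms coincide. The finite-support observation above handles this, so I would state it explicitly. Without it, one would have to pass to a smoothed or binned NID and add a discretization term.
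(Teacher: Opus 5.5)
\textbf{Verdict: correct, by a different route from the paper.} Your proof is correct for the estimator you define. The paper's sketch assumes the walk length $l$ exceeds the mixing time $\tau_{\text{mix}}$ and then invokes the Dvoretzky--Kiefer--Wolfowitz (DKW) inequality.

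\textbf{How your route differs.} You make the samples \emph{exact}. Keeping only the first step of each walk gives i.i.d.\ draws from $p_i^g$, so you need no mixing assumption and $l$ plays no role. You then bound TV directly through the binomial variance and Cauchy--Schwarz, which gives the explicit constant $\tfrac12\sqrt{d_i/r}$.

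\textbf{What this fixes in the paper's sketch.} It repairs two weak points.
First, DKW controls only the Kolmogorov distance $\sup_\delta|F(\delta)-\hat F(\delta)|$ between CDFs. That distance is dominated by TV but does not dominate it. Upgrading to TV requires exactly your finite-support observation. Even then, since each atom's mass is a difference of two CDF values, the upgrade costs a factor $|S_i|$. The result is $\mathcal{O}(|S_i|/\sqrt r)$, against your sharper $\mathcal{O}(\sqrt{|S_i|/r})$.
Second, once a walk has mixed, its position follows the stationary law $\pi_j\propto d_j$, not the uniform law on $\mathcal{N}(i)$ that defines $p_i^g$. The paper's condition therefore leaves unaddressed precisely the bias you flag in your ``main obstacle'' paragraph.

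\textbf{What the DKW route would buy.} If aimed at the right target, it gives an exponential tail bound $\Pr(\sup_\delta|F-\hat F|>\epsilon)\le 2e^{-2r\epsilon^2}$. It also gives a support-size-free guarantee in the Kolmogorov metric, and hence in $W_1$ for bounded support. That guarantee stays meaningful for a smoothed or continuous NID, where TV against an empirical measure is degenerate. Your argument as written gives only an expectation bound. You can recover a high-probability version via McDiarmid's inequality, since replacing one walk changes the TV error by at most $1/r$.

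\textbf{Two small corrections to your side remarks.}
Reweighting later visits by $1/d_j$ is exact only in the stationary regime. Step one of a walk from $v_i$ is already uniform on $\mathcal{N}(i)$, so those weights would bias it. For $t\ge 2$, the restriction of $P^t(i,\cdot)$ to $\mathcal{N}(i)$ is generally not proportional to $d_j$; for example, it vanishes at $t=2$ in a triangle-free graph. That fallback would thus reintroduce the mixing assumption, and the first-step estimator is the right choice.
Also, TV between probability measures with disjoint atoms is maximal (equal to $1$), not infinite. Your underlying point still stands: a nontrivial bound needs a common finite support or smoothing.
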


Theorem~\ref{thm:nid} justifies the use of random-walk positional encodings, showing that they capture feature inconsistency patterns, essential for aligning homophilous and heterophilous representations.

\noindent\textbf{Optimal Representation Conditions:} The following theorem formalizes the conditions under which \textsc{AlignGAE} achieves optimal representation learning by maintaining both spectral fidelity and structural consistency.

\begin{theorem}
\label{thm:optimal}
Suppose the alignment loss $\mathcal{L}_{\text{align}} \to 0$ in \textsc{AlignGAE}. Then, the learned representations $\mathbf{Z}_g$ (geometric view) and $\mathbf{Z}_m$ (message-passing view) satisfy maximum mutual information $\mathrm{MI}(\mathbf{Z}_g; \mathbf{Z}_m) = \mathrm{H}(\mathbf{Z}_g) = \mathrm{H}(\mathbf{Z}_m)$, identical NID distributions $p_i^g(\delta) = p_i^m(\delta)$ for all $i \in \mathcal{V}$, and full frequency spectrum capture $\|\mathbf{X} - \mathbf{U} \mathbf{U}^\top (\mathbf{Z}_g + \mathbf{Z}_m)\|_F^2 = 0$.
\end{theorem}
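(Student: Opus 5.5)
Since $\mathcal{L}_{\text{align}}$ has not yet been defined at this point, I will fix a concrete form and argue under it. Assume the alignment loss is a sum of three nonnegative terms:
\begin{equation}
\mathcal{L}_{\text{align}} = \|\mathbf{Z}_g - \mathbf{Z}_m\|_F^2 + \sum_{i\in\mathcal{V}} \mathrm{KL}\!\left(p_i^g \,\|\, p_i^m\right) + \|\mathbf{X} - (\mathbf{Z}_g + \mathbf{Z}_m)\|_F^2 ,
\end{equation}
that is, a representation-matching term, an NID-matching term (with $p_i^m$ the NID induced by the message-passing view via Theorem~\ref{thm:nid}), and a reconstruction term. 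Because every term is nonnegative, $\mathcal{L}_{\text{align}}\to 0$ forces each term to zero. The three claims then follow one term at a time, in the limit along a convergent subsequence of the embeddings. I will assume this limit exists, for instance by boundedness of normalized embeddings.

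\emph{Step 1 (mutual information).} The vanishing first term gives $\mathbf{Z}_g=\mathbf{Z}_m$ in the limit, so each view is a deterministic (identity) function of the other. Then $\mathrm{H}(\mathbf{Z}_g\mid\mathbf{Z}_m)=\mathrm{H}(\mathbf{Z}_m\mid\mathbf{Z}_g)=0$. Hence $\mathrm{MI}(\mathbf{Z}_g;\mathbf{Z}_m)=\mathrm{H}(\mathbf{Z}_g)-\mathrm{H}(\mathbf{Z}_g\mid\mathbf{Z}_m)=\mathrm{H}(\mathbf{Z}_g)$, and symmetrically this equals $\mathrm{H}(\mathbf{Z}_m)$. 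This is the maximum possible value, since $\mathrm{MI}\le\min(\mathrm{H}(\mathbf{Z}_g),\mathrm{H}(\mathbf{Z}_m))$. To keep the entropies finite I will treat the representations as discrete (e.g.\ quantized) random variables; with continuous embeddings one would instead use the InfoNCE interpretation of the matching term, which gives the lower bound tight.

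\emph{Step 2 (NID).} A vanishing KL term gives $p_i^g=p_i^m$ almost everywhere for each $i$, by Gibbs' inequality. Pinsker's inequality upgrades this to convergence in total variation, which matches the metric used in Theorem~\ref{thm:nid}. The $\mathcal{O}(1/\sqrt r)$ random-walk approximation error must also be handled. I will do this either by letting $r\to\infty$ alongside training, or by stating that equality holds up to that error and exactly in the limit.

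\emph{Step 3 (spectrum).} $\mathbf{U}$ is orthonormal and square, so $\mathbf{U}\mathbf{U}^\top=\mathbf{I}$. The target quantity therefore equals $\|\mathbf{X}-(\mathbf{Z}_g+\mathbf{Z}_m)\|_F^2$, which is zero by the third term. To make the ``full spectrum'' reading meaningful, I will split it as
\begin{equation}
\|\mathbf{X}-\mathbf{U}\mathbf{U}^\top(\mathbf{Z}_g+\mathbf{Z}_m)\|_F^2 = \sum_{k\in\mathcal{F}_L}\|\hat{\mathbf{x}}_k-\hat{\mathbf{z}}_k\|^2+\sum_{k\in\mathcal{F}_H}\|\hat{\mathbf{x}}_k-\hat{\mathbf{z}}_k\|^2 ,
\end{equation}
where $\hat{\mathbf{x}}_k$ and $\hat{\mathbf{z}}_k$ are the $k$-th spectral coefficients of $\mathbf{X}$ and $\mathbf{Z}_g+\mathbf{Z}_m$ (Parseval). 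A zero total makes both sums vanish. This contrasts with the low-pass truncation error displayed earlier: the message-passing view supplies the $\mathcal{F}_L$ part, and the non-convolutional geometric view preserves the $\mathcal{F}_H$ part.

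\emph{Main obstacle.} The real difficulty is Step 3 together with the consistency of Steps 1 and 3. If $\mathbf{Z}_g=\mathbf{Z}_m$, then $\mathbf{Z}_g+\mathbf{Z}_m=2\mathbf{Z}_g$. So Step 3 needs the reconstruction target and the dimensions to match, with $\mathbf{Z}$ in $\mathbb{R}^{n\times d}$, and it needs the reconstruction term to actually belong to $\mathcal{L}_{\text{align}}$ rather than to a separate loss. My first fix would be to require alignment only up to an invertible map, e.g.\ matching after projection heads. That still gives deterministic mutual dependence, so Step 1 goes through, while Step 3 holds as stated. Failing that, I would state Step 3 as a consequence of the joint objective $\mathcal{L}_{\text{align}}+\mathcal{L}_{\text{rec}}\to 0$.
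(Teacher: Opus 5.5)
\textbf{There is a genuine gap: you changed the hypothesis.} In \textsc{AlignGAE} the alignment loss is not yours to define. It is the NID comparison from the Methodology section: in the main experiments $\mathcal{L}_{\text{KL}}=\frac{1}{n}\sum_{i}D_{\text{KL}}(p_i^g\,\|\,p_i^m)$, with $W_1$ or the contrastive loss as alternatives. It contains neither a representation-matching term $\|\mathbf{Z}_g-\mathbf{Z}_m\|_F^2$ nor a reconstruction term $\|\mathbf{X}-(\mathbf{Z}_g+\mathbf{Z}_m)\|_F^2$. Once you insert each conclusion into the loss as its own nonnegative penalty, Steps 1 and 3 only say that a quantity built to vanish exactly when the conclusions hold does vanish. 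Nothing is proved about the actual loss. Only Step 2 carries over, and it matches the paper's first move: $\mathcal{L}_{\text{KL}}\to0$ gives $p_i^g=p_i^m$ by Gibbs' inequality.

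\textbf{Steps 1 and 3 cannot be recovered from NID equality}, because a per-node NID only records the distribution of scalar distances to neighbors. For Step 1, take two disjoint edges $\{1,2\}$, $\{3,4\}$ with scalar embeddings $\mathbf{Z}_g=(0,1,0,1)^\top$ and $\mathbf{Z}_m=(0,1,1,0)^\top$. Every NID in both views is the point mass at $1$, so the alignment loss is zero. Yet for a uniformly random node the two views are independent fair bits, so $\mathrm{MI}(\mathbf{Z}_g;\mathbf{Z}_m)=0<\log 2=\mathrm{H}(\mathbf{Z}_g)$. For Step 3, your own remark $\mathbf{U}\mathbf{U}^\top=\mathbf{I}$ shows the claim is literally $\mathbf{X}=\mathbf{Z}_g+\mathbf{Z}_m$. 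But NIDs are invariant under translations $\mathbf{Z}\mapsto\mathbf{Z}+\mathbf{1}c^\top$ of either view, so any exact reconstruction can be broken without changing the loss. More simply, $\mathbf{Z}_g=\mathbf{Z}_m=\mathbf{0}$ has zero alignment loss and residual $\|\mathbf{X}\|_F^2$. Your fallback to the joint objective does not close this either. $\mathcal{L}_{\text{node}}$ is a scale-invariant cosine error between an MLP decoding of $\mathbf{Z}_m$ and $\mathbf{x}_i\oplus\mathbf{p}_i$, and $\mathcal{L}_{\text{edge}}$ is a BCE on edges. Neither forces $\mathbf{X}=\mathbf{Z}_g+\mathbf{Z}_m$, and the $256$-dimensional embeddings do not even match the dimension of $\mathbf{X}$.

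\textbf{The paper bridges these steps by assertion, not derivation.} It claims NID equality yields $\mathrm{H}(\mathbf{Z}_g\mid\mathbf{Z}_m)\to0$. For the spectral claim it models the encoders as filters $\mathbf{Z}_g=\mathbf{U}g(\mathbf{\Lambda})\mathbf{U}^\top\mathbf{X}$ and $\mathbf{Z}_m=\mathbf{U}h(\mathbf{\Lambda})\mathbf{U}^\top\mathbf{X}$, and posits that alignment drives $g+h\equiv1$. The examples above show these are extra hypotheses, not consequences of $\mathcal{L}_{\text{align}}\to0$. A correct proof must state them explicitly rather than encode them into a redefined loss. They are an identifiability assumption making each view an invertible function of the other, and the filter model with $g+h\equiv1$.
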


Theorem~\ref{thm:optimal} provides the theoretical basis for our dual-encoder architecture, dual reconstruction tasks, and NID-based alignment. It ensures that \textsc{AlignGAE} preserves both homophilous (low-frequency) and heterophilous (high-frequency) signals, addressing the spectral limitations of standard masked graph autoencoders.

%% file: methods.tex
\section{Methodology}
\label{sec:methodology}
\begin{figure*}[htbp]
    \centering
    \includegraphics[width=0.98\textwidth, height=0.3\textheight, keepaspectratio]{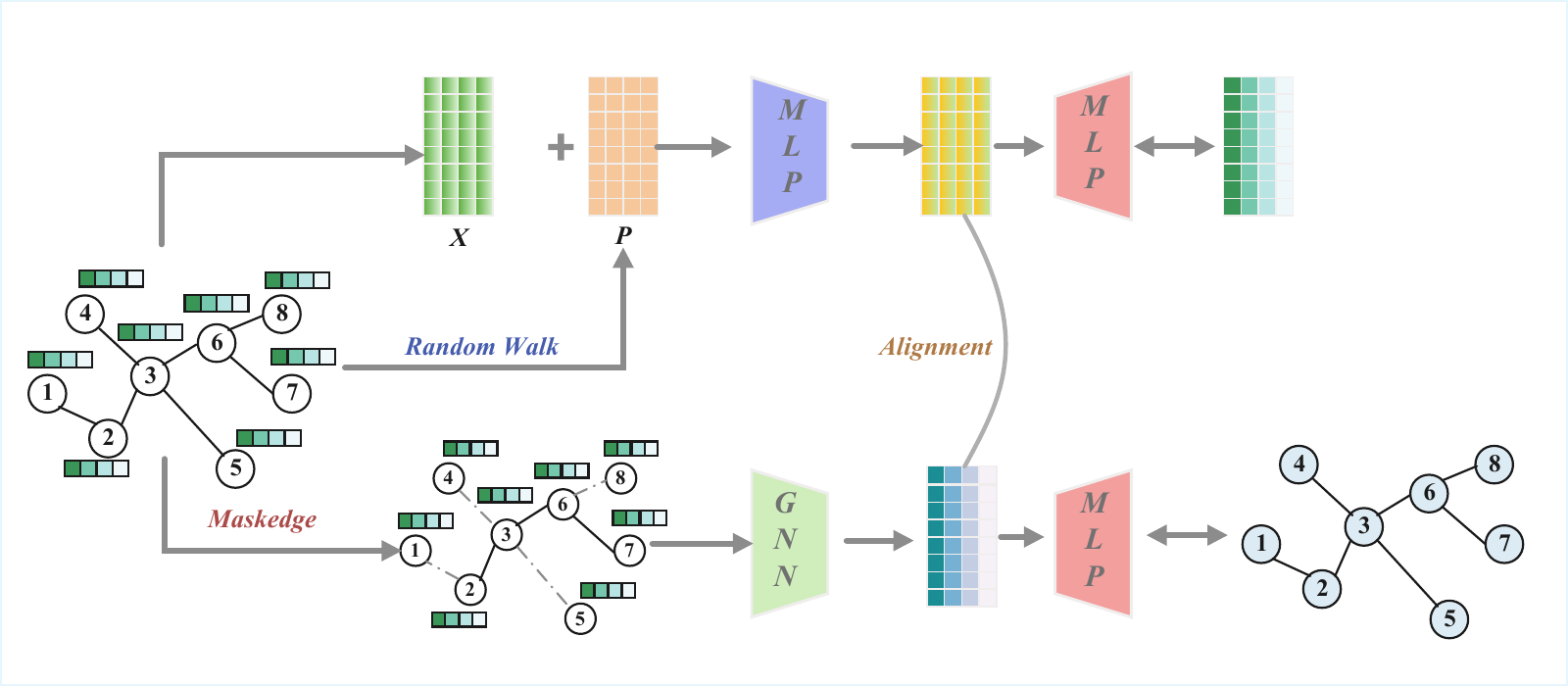}
    \caption{Overview of the \textsc{AlignGAE} framework.}
    \label{fig:frame}
\end{figure*}
\subsection{Overview}
\label{subsec:overview}

Masked graph autoencoders have excelled in graph representation learning by capturing local structural patterns through message-passing. However, as discussed in Section Preliminary, these models predominantly focus on low-frequency components ($\mathcal{F}_L$), neglecting high-frequency signals ($\mathcal{F}_H$) essential for heterophilous relationships. To address this, \textsc{AlignGAE} extends the masked graph autoencoder framework with dual-view spectral decomposition and alignment, enabling the model to capture both homophilous and heterophilous patterns. It processes graph data through two complementary encoders: one for structural patterns (neighborhood view) and another for node attributes (node view), aligning them via NID matching to ensure optimal representation. This approach enhances the original framework, preserving its strength in structural learning while improving its ability to handle diverse graph structures. The dual reconstruction tasks ensure spectral completeness, preserving both low- and high-frequency components, while the NID alignment guarantees semantic consistency between the views, resulting in more robust and generalizable node embeddings.

\subsection{Node Positional Embedding}
\label{subsec:positional_embedding}

To integrate structural and attribute information, \textsc{AlignGAE} introduces node positional embeddings that capture global topological context, complementing local neighborhood structures. These embeddings provide a grounded approximation of the NID, which quantifies feature inconsistencies within a node's neighborhood, as formalized in Theorem~\ref{thm:nid}.

Positional embeddings are derived from random walks, simulating diffusion processes over the graph. For each node $v_i \in \mathcal{V}$, we perform $r$ random walks of length $l$ starting from $v_i$. The random walk sequences are used to train a Skip-gram model with negative sampling:
\begin{equation}
    p(v_j | v_i) = \frac{\exp(\mathbf{z}_j^\top \mathbf{z}_i)}{\sum_{k \in \mathcal{V}} \exp(\mathbf{z}_k^\top \mathbf{z}_i)},
    \label{eq:skipgram}
\end{equation}
where $p(v_j | v_i)$ is the probability of node $v_j$ appearing in the context of node $v_i$ within a random walk. The resulting node embeddings $\mathbf{P} = [\mathbf{z}_1, \mathbf{z}_2, \dots, \mathbf{z}_n]^\top \in \mathbb{R}^{n \times d_p}$ provide complementary information to node features, enhancing the input to the node view encoder. 
This method captures global topological roles of nodes, which is particularly beneficial for heterophilous graphs where relationships depend more on global positioning than local homophily.

\subsection{Dual-Encoder Architecture}
\label{subsec:dual_encoder}

\textsc{AlignGAE} employs a dual-encoder architecture that explicitly encodes two complementary views of the graph, implementing the spectral decomposition principle outlined in Section Preliminary. This design ensures that both homophilous (low-frequency) and heterophilous (high-frequency) signals are preserved, addressing the spectral completeness requirement.

\subsubsection{Edge Masking}

To improve structural input and promote learning of long-range dependencies, we adopt a topology random masking strategy. This method extends standard masked graph autoencoder approaches by better preserving heterophilous signals. Root nodes are sampled using a Bernoulli distribution with a mask ratio $p = 0.7$, ensuring sufficient structural retention while maintaining meaningful occlusion. From each root node, random walks of length $l = 3$ select edges, forming a mask matrix $\mathbf{M} \in \{0, 1\}^{n \times n}$, where $M_{ij} = 1$ if edge $(v_i, v_j)$ is retained, and $0$ otherwise. The masked adjacency matrix is computed as:
\begin{equation}
    \tilde{\mathbf{A}} = \mathbf{A} \circ \mathbf{M},
    \label{eq:masked_adj}
\end{equation}
where $\circ$ denotes the Hadamard product. The masked graph $\mathcal{G}_a = (\tilde{\mathbf{A}}, \mathbf{X})$ is input to the neighborhood view encoder, encouraging the model to infer missing connections from visible edges. This improves its ability to capture higher-order structural dependencies while preserving spectral characteristics.

\subsubsection{Node View Encoder}

The node view encoder preserves individual node characteristics, which are crucial for heterophilous graphs, where aggregation may obscure differences in node attributes. The encoder processes node features $\mathbf{X} \in \mathbb{R}^{n \times d}$ and positional embeddings $\mathbf{P} \in \mathbb{R}^{n \times d_p}$ using a shared nonlinear mapper $\psi$:
\begin{equation}
    \mathbf{H} = \psi(\mathbf{X}; \theta_1), \quad \mathbf{S} = \psi(\mathbf{P}; \theta_2),
    \label{eq:mapper}
\end{equation}
where $\psi$ consists of a linear layer followed by ELU activation, and $\theta_1$, $\theta_2$ are learnable parameters. The transformed features are concatenated and processed by an MLP to produce node-wise embeddings:
\begin{equation}
    \mathbf{Z}_m = \text{MLP}(\text{CONCAT}(\mathbf{H}, \mathbf{S}); \omega_1),
    \label{eq:mlp}
\end{equation}
where $\omega_1$ represents the MLP's parameters. This design ensures $\mathbf{Z}_m$ retains high-frequency signals, including attribute variations and positional nuances, without the smoothing effects of graph convolutions, directly addressing the spectral deficiency.

\subsubsection{Neighborhood View Encoder}

The neighborhood view encoder utilizes a two-layer Graph Convolutional Network (GCN) with learnable skip connections to process the masked graph $\mathcal{G}_a$, producing structural embeddings:
\begin{equation}
    \mathbf{Z}_g = \text{MPGNN}(\mathbf{X}, \tilde{\mathbf{A}}; \omega_2),
    \label{eq:mpgnn}
\end{equation}
where $\omega_2$ are learnable parameters. Each GCN layer is defined as:
\begin{equation}
    \mathbf{H}^{(l+1)} = \sigma\left(\alpha_l \mathbf{H}^{(l)} + (1 - \alpha_l) \tilde{\mathbf{D}}^{-\frac{1}{2}} \tilde{\mathbf{A}} \tilde{\mathbf{D}}^{-\frac{1}{2}} \mathbf{H}^{(l)} \mathbf{W}^{(l)}\right),
    \label{eq:mpgnn_layer}
\end{equation}
where $\mathbf{H}^{(l)}$ denote the hidden state at layer $l$, $\tilde{\mathbf{D}}$ the degree matrix of the masked adjacency matrix $\tilde{\mathbf{A}}$, $\mathbf{W}^{(l)}$ the weight matrix, $\sigma$ the ELU activation, and $\alpha_l$ a learnable skip-connection parameter. This formulation acts as a low-pass filter, emphasizing homophilous patterns, with $\tilde{\mathbf{A}}$ capturing structural dependencies. The learnable skip connections enable adaptive frequency filtering, offering a more flexible spectral response than standard MPGNNs. The node-wise embeddings $\mathbf{Z}_m$ and structural embeddings $\mathbf{Z}_g$ jointly form a comprehensive representation: $\mathbf{Z}_m$ retains node-specific details, while $\mathbf{Z}_g$ encodes collective structural insights. This dual-view approach ensures \textsc{AlignGAE} captures the full frequency spectrum, with the neighborhood view targeting low-frequency $\mathcal{F}_L$ signals and the node view preserving high-frequency $\mathcal{F}_H$ components, aligned via NID matching (see Section Preliminary).


\subsection{Dual Reconstruction Tasks}
\label{subsec:reconstruction}

To ensure spectral completeness and robust learning, \textsc{AlignGAE} employs dual reconstruction tasks that effectively capture both graph topology and node attributes, ensuring that both homophilous and heterophilous signals contribute to the learned representations.

\subsubsection{Edge Reconstruction}

To reconstruct the graph topology, \textsc{AlignGAE} predicts masked edges using embeddings from both structural and positional encoders. The edge decoder integrates these embeddings to produce edge predictions:
\begin{equation}
    p(i,j|\hat{\mathcal{G}}) = \sigma\left(\text{MLP}\left( (\mathbf{z}_i^g \circ \mathbf{z}_j^g) \oplus (\mathbf{p}_i \circ \mathbf{p}_j); \phi_1 \right)\right),
    \label{eq:edge}
\end{equation}
where $\mathbf{z}_i^g \in \mathbf{Z}_g$ and $\mathbf{p}_i \in \mathbf{P}$ are the structural and positional embeddings, $\circ$ and $\oplus$ represent element-wise multiplication and concatenation, respectively, and $\sigma$ is the sigmoid function. This cross-view approach leverages low-frequency structural signals and positional context to improve edge prediction.

The reconstruction loss uses binary cross-entropy:
\begin{align}
    \mathcal{L}_{\text{edge}} =\ 
    & -\frac{1}{|\mathcal{E}_m|} \sum_{(i,j) \in \mathcal{E}_m} \log p(i,j|\hat{\mathcal{G}}) \notag \\
    & -\frac{1}{|\mathcal{E}_n|} \sum_{(i,j) \in \mathcal{E}_n} \log \left(1 - p(i,j|\hat{\mathcal{G}})\right),
    \label{eq:edge_loss}
\end{align}

\subsubsection{Node Attribute Reconstruction}

For node attributes, \textsc{AlignGAE} reconstructs the concatenated features $\mathbf{X} \oplus \mathbf{P}$ using node embeddings $\mathbf{Z}_m$. The decoder is defined as:
\begin{equation}
    \mathbf{\hat{X}} = \text{MLP}(\mathbf{Z}_m; \phi_2),
    \label{eq:node_decoder}
\end{equation}
where $\phi_2$ are learnable parameters. We use a scaled cosine error (SCE) loss to account for differences between attributes and positional embeddings:
\begin{equation}
    \mathcal{L}_{\text{node}} = \frac{1}{n} \sum_{i \in \mathcal{V}} \left(1 - \frac{\mathbf{\hat{x}}_i \cdot (\mathbf{x}_i \oplus \mathbf{p}_i)}{\|\mathbf{\hat{x}}_i\| \|\mathbf{x}_i \oplus \mathbf{p}_i\|}\right)^\alpha,
    \label{eq:node}
\end{equation}
with $\alpha = 2$.

\subsection{Neighborhood Identity Distribution Alignment}
\label{subsec:nid_alignment}

Integrating structural and attribute information to generate coherent node embeddings is a central challenge in graph representation learning, especially in heterophilous graphs where neighbors often exhibit dissimilar features. To address this, we introduce \textit{NID alignment}, which ensures that structural and attribute embeddings are aligned while maintaining their complementary characteristics.

\noindent\textbf{Aligning Structural and Attribute Views.} 
In conventional graph neural networks (GNNs), structural and attribute information are processed separately, often resulting in lost interactions between them. In \textsc{AlignGAE}, we unify these views by aligning the NIDs of the structural and attribute embeddings. This ensures consistent contributions from both the graph structure and node attributes, enabling the model to capture both homophilous (low-frequency) and heterophilous (high-frequency) patterns.

\noindent\textbf{Alignment Strategies.} 
We propose three alignment strategies for comparing the structural and attribute NIDs:

\begin{itemize}[noitemsep]
    \item \textbf{KL Divergence.} The Kullback-Leibler (KL) divergence measures the dissimilarity between two probability distributions. For the structural NID \( p_i^g(\delta) \) and attribute NID \( p_i^m(\delta) \), it is defined as:
    \begin{equation}
        D_{\text{KL}}\left( p_i^g(\delta) \parallel p_i^m(\delta) \right) = \int_{\mathcal{R}} p_i^g(\delta) \log \frac{p_i^g(\delta)}{p_i^m(\delta)} \, d\delta
        \label{eq:kl_divergence}
    \end{equation}
    Minimizing this divergence aligns the distributions of structural and attribute embeddings.

    \item \textbf{Wasserstein Distance.} The Wasserstein distance, or Earth Mover's Distance, quantifies the minimal "effort" to transform one distribution into another:
    \begin{align}
    W_1\left( p_i^g(\delta), p_i^m(\delta) \right)
    =\ & \inf_{\gamma \in \Gamma^{\scriptscriptstyle (p_i^g,\, p_i^m)}} 
    \int_{\mathcal{R} \times \mathcal{R}} 
    \|\delta_1 - \delta_2\|_1\, d\gamma(\delta_1, \delta_2)
    \label{eq:wasserstein}
\end{align}

    Minimizing this distance ensures alignment with minimal transformation cost.

    \item \textbf{Contrastive Alignment.} This method maximizes the similarity between the structural and attribute embeddings of the same node while minimizing the similarity between different nodes. The contrastive loss is given by:
    \begin{equation}
        \mathcal{L}_{\text{cont}} = -\frac{1}{n} \sum_{i \in \mathcal{V}} \log \frac{\exp(\text{sim}(\mathbf{z}_i^g, \mathbf{z}_i^m)/\tau)}{\sum_{j \in \mathcal{N}(i) \cup \{i\}} \exp(\text{sim}(\mathbf{z}_i^g, \mathbf{z}_j^m)/\tau)}
        \label{eq:contrastive_loss}
    \end{equation}
    This loss ensures that embeddings of the same node in both views are maximally similar, while embeddings of different nodes are distinguished.
\end{itemize}

The NID alignment in \textsc{AlignGAE}, as formalized in Theorem~\ref{thm:optimal}, aims to preserve both low- and high-frequency signals, ensuring structural and attribute embeddings capture complementary information. By aligning NIDs, \textsc{AlignGAE} integrates structural patterns and attribute details into robust, generalizable node representations. The structural view captures low-frequency components via neighborhood relationships, while the attribute view encodes high-frequency components. This alignment is vital for both homophilous and heterophilous graphs: in heterophilous graphs, it preserves meaningful structural relationships despite dissimilar node attributes, avoiding over-smoothing; in homophilous graphs, it reinforces shared neighbor characteristics. Thus, NID alignment enables \textsc{AlignGAE} to effectively combine both signal types, supporting robust node representation learning across diverse graph structures.

\subsection{Objective Function}
\label{subsec:objective}

The objective function of \textsc{AlignGAE} balances graph topology and node attribute reconstruction with semantic consistency across its dual-encoder architecture. The total loss combines reconstruction and alignment objectives:
\begin{equation}
    \mathcal{L} = \mathcal{L}_{\text{node}}+ \lambda\mathcal{L}_{\text{edge}} + \mu\mathcal{L}_{\text{align}},
    \label{eq:objective}
\end{equation}
where $\mathcal{L}_{\text{align}}$ is the alignment loss. Hyperparameters $\lambda$ and $\mu$ are tuned to achieve optimal trade-offs across different datasets.

This formulation enables \textsc{AlignGAE} to produce robust representations that satisfy the optimal conditions for spectral completeness, ensuring strong performance on both homophilous and heterophilous graphs.

%% file: Experiments.tex
\section{Experiments}
\label{experiments}


\subsection{Datasets}
\label{subsec:datasets}


To evaluate \textsc{AlignGAE}'s effectiveness, we conduct experiments on eleven benchmark datasets spanning four graph domains: citation, co-authorship, departmental webpage, and Wikipedia page networks. These datasets vary in scale and structure, categorized into homophilous graphs, where nodes with similar labels are more likely connected, and heterophilous graphs, where connections often span dissimilar labels. For node classification, we use standard public splits from prior work \cite{kipf2017semisupervisedclassificationgraphconvolutional,pei2020geomgcngeometricgraphconvolutional,yang2016revisitingsemisupervisedlearninggraph} or widely accepted splits \cite{Zhu_2021,thakoor2023largescalerepresentationlearninggraphs} to ensure comparability. Key dataset statistics, including node and edge counts, feature dimensionality, and class numbers, are summarized in Table~\ref{tab:dataset_stats}, grouped by structural homophily.

\begin{table}[t]
  \centering
  \caption{Summary of dataset characteristics.}
  \label{tab:dataset_stats}
  \setlength{\tabcolsep}{3mm} 
  \begin{tabular}{lrrrr}
    \toprule
    \textbf{Dataset} & \textbf{Nodes} & \textbf{Edges} & \textbf{Feat.} & \textbf{Cls.} \\
    \midrule
    \multicolumn{5}{l}{\textit{Homophilic Graphs}} \\
    Cora             & 2,708  & 10,556   & 1,433 & 7  \\
    Citeseer         & 3,327  & 9,104    & 3,703 & 6  \\
    CS               & 18,333 & 163,788  & 6,805 & 15 \\
    Physics          & 34,493 & 495,924  & 8,415 & 5  \\
    \midrule
    \multicolumn{5}{l}{\textit{Heterophilic Graphs}} \\
    Cornell          & 183    & 295      & 1,703 & 5  \\
    Wisconsin        & 251    & 499      & 1,703 & 5  \\
    Texas            & 183    & 309      & 1,703 & 5  \\
    Chameleon        & 2,277  & 36,051   & 2,325 & 5  \\
    Squirrel         & 5,201  & 216,933  & 2,089 & 5  \\
    \bottomrule
  \end{tabular}
\end{table}

\subsection{Baselines}
\label{subsec:baselines}

To evaluate \textsc{AlignGAE}, we compare its performance against a range of unsupervised graph representation learning methods, categorized into three paradigms: topology-only embedding, graph contrastive learning, and graph autoencoders. Topology-based methods, such as \textit{DeepWalk}~\cite{perozzi2014deepwalk}, capture local structural patterns using random walks. In graph contrastive learning, methods like \textit{DGI}~\cite{veličković2018deepgraphinfomax}, \textit{MVGRL}~\cite{hamilton2018inductiverepresentationlearninglarge}, and \textit{GRACE}~\cite{GRACE} learn robust node representations by contrasting different graph views, while approaches such as \textit{GCA}~\cite{zhu2021graph}, \textit{GREET}~\cite{liu2023beyond}, and \textit{HGRL}~\cite{chen2022towards} refine augmentations to mitigate issues like over-smoothing. Methods like \textit{GBT}~\cite{Bielak_2022} and \textit{BGRL}~\cite{BGRL} leverage bootstrap/self-distillation, while \textit{DSSL}~\cite{zhu2021dssl} and \textit{SP-GCL}~\cite{SP-GCL} incorporate structural and spectral priors. In the graph autoencoder category, methods such as \textit{GAE}~\cite{hamilton2018inductiverepresentationlearninglarge} and \textit{MaskGAE}~\cite{MaskGAE} focus on reconstructing graph structures, while \textit{GraphMAE}~\cite{hou2022graphmae} and \textit{GiGaMAE}~\cite{shi2023gigamae} extend these methods to include node attribute reconstruction and joint edge/node masking. \textit{MVGE}~\cite{lin2023MVGE} and \textit{MVMI}~\cite{fan2022mvmi} integrate multiple attribute perspectives for enhanced representation learning.

\subsection{Training and Evaluation Protocol}
\label{subsec:training_evaluation}

To evaluate the quality of unsupervised embeddings for node classification, we employ a two-stage protocol. Models are pretrained in an unsupervised manner on the full graph to generate 256-dimensional node embeddings over 500 epochs, using the Adam optimizer with a learning rate of $10^{-2}$ or $10^{-3}$ and a weight decay of $5\times10^{-5}$. The best checkpoint is selected every 25 epochs based on a held-out validation set. These frozen embeddings are then used as input features for a one-vs-rest logistic regression classifier. We adopt publicly available train/validation/test splits~\cite{thakoor2023largescalerepresentationlearninggraphs, kipf2016semi, hamilton2017representation}, conduct experiments with 10 random seeds, and report mean accuracy with standard deviation. For \textsc{GDAE}, hyperparameters are set to $\lambda_1 = 0.1$ and $\lambda_2 = 10^{-3}$, while baselines use their default hyperparameters.
\subsection{Experimental Results}
\label{subsec:results}

\begin{table}[t]
\centering
\caption{Node classification accuracy (\%) on homophilous datasets.}
\label{tab:exp-results-part1}
\small 
\setlength{\tabcolsep}{2mm} 
\begin{tabular}{lcccc}
\toprule
\textbf{Method} & \textbf{Cora} & \textbf{Cite.} & \textbf{CS} & \textbf{Phys.} \\
\midrule
DeepWalk   & 67.6 $\pm$ 1.5 & 47.9 $\pm$ 1.7 & 83.8 $\pm$ 0.3 & 89.9 $\pm$ 0.3 \\
GRACE      & 80.8 $\pm$ 1.0 & 69.7 $\pm$ 1.4 & 92.2 $\pm$ 0.2 & 93.2 $\pm$ 0.2 \\
DGI        & 78.4 $\pm$ 0.7 & 70.9 $\pm$ 1.5 & 93.5 $\pm$ 0.2 & 93.9 $\pm$ 0.4 \\
MVGRL      & 82.3 $\pm$ 1.2 & 68.2 $\pm$ 1.0 & 92.1 $\pm$ 0.4 & 95.2 $\pm$ 0.1 \\
GREET      & 82.9 $\pm$ 0.7 & 72.6 $\pm$ 0.7 & 94.0 $\pm$ 0.2 & 95.1 $\pm$ 0.2 \\
HGRL       & 78.4 $\pm$ 0.9 & 67.7 $\pm$ 1.4 & 92.0 $\pm$ 0.2 & 93.8 $\pm$ 0.4 \\
GCA        & 80.3 $\pm$ 1.0 & 67.8 $\pm$ 1.6 & 92.9 $\pm$ 0.2 & 95.3 $\pm$ 0.3 \\
MVMI       & 73.8 $\pm$ 1.5 & 63.1 $\pm$ 2.8 & 89.1 $\pm$ 1.0 & 94.4 $\pm$ 0.3 \\
MVGE       & 81.2 $\pm$ 0.9 & 71.5 $\pm$ 0.7 & 92.8 $\pm$ 0.2 & 94.4 $\pm$ 0.3 \\
GAE        & 78.0 $\pm$ 1.5 & 58.5 $\pm$ 1.1 & 82.8 $\pm$ 1.7 & 93.8 $\pm$ 0.3 \\
GraphMAE   & 81.2 $\pm$ 0.6 & 70.8 $\pm$ 0.5 & 90.2 $\pm$ 0.5 & 94.1 $\pm$ 0.1 \\
MaskGAE    & 82.7 $\pm$ 0.9 & 71.5 $\pm$ 0.9 & 92.8 $\pm$ 0.2 & 95.5 $\pm$ 0.2 \\
S2GAE      & 81.1 $\pm$ 0.8 & 71.1 $\pm$ 0.8 & 90.9 $\pm$ 0.3 & 94.7 $\pm$ 0.1 \\
GiGaMAE    & 81.8 $\pm$ 0.8 & 66.2 $\pm$ 1.5 & 91.6 $\pm$ 0.1 & 94.6 $\pm$ 0.2 \\
\midrule
\textsc{AlignGAE} & \textbf{84.7 $\pm$ 0.5} & \textbf{73.2 $\pm$ 0.5} & \textbf{94.3 $\pm$ 0.2} & \textbf{95.9 $\pm$ 0.2} \\
\bottomrule
\end{tabular}
\end{table}

\begin{table}[t]
\centering
\caption{Node classification accuracy (\%) on heterophilous datasets.}
\label{tab:exp-results-part2}
\small 
\setlength{\tabcolsep}{0.5mm} 
\begin{tabular}{lccccc}
\toprule
\textbf{Method} & \textbf{Cornell} & \textbf{Wisc.} & \textbf{Texas} & \textbf{Cham.} & \textbf{Squir.} \\
\midrule
DeepWalk   & 41.1 $\pm$ 6.6 & 48.0 $\pm$ 6.6 & 58.4 $\pm$ 4.1 & 42.0 $\pm$ 2.4 & 28.7 $\pm$ 1.4 \\
GRACE      & 45.4 $\pm$ 4.0 & 54.3 $\pm$ 5.2 & 58.7 $\pm$ 4.8 & 45.0 $\pm$ 2.4 & 30.1 $\pm$ 1.6 \\
DGI        & 58.4 $\pm$ 4.7 & 69.6 $\pm$ 4.7 & 68.9 $\pm$ 7.0 & 46.2 $\pm$ 1.6 & 34.4 $\pm$ 1.3 \\
MVGRL      & 48.7 $\pm$ 6.5 & 61.0 $\pm$ 4.1 & 65.1 $\pm$ 5.6 & 55.7 $\pm$ 1.5 & 39.5 $\pm$ 1.7 \\
GREET      & 72.0 $\pm$ 1.2 & 81.2 $\pm$ 4.8 & 77.3 $\pm$ 3.7 & 54.8 $\pm$ 1.5 & 40.5 $\pm$ 0.9 \\
HGRL       & 70.8 $\pm$ 5.9 & 78.2 $\pm$ 2.8 & 74.6 $\pm$ 6.3 & 45.0 $\pm$ 1.8 & 36.2 $\pm$ 1.3 \\
GCA        & 45.1 $\pm$ 6.5 & 53.3 $\pm$ 4.0 & 60.3 $\pm$ 5.7 & 49.8 $\pm$ 2.2 & 35.5 $\pm$ 1.6 \\
MVMI       & 65.1 $\pm$ 4.3 & 67.1 $\pm$ 5.7 & 68.4 $\pm$ 5.6 & 44.0 $\pm$ 1.4 & 34.5 $\pm$ 0.7 \\
MVGE       & 71.1 $\pm$ 3.8 & 80.6 $\pm$ 4.3 & 76.2 $\pm$ 4.5 & 47.5 $\pm$ 2.5 & 31.2 $\pm$ 1.4 \\
GAE        & 45.7 $\pm$ 6.2 & 53.3 $\pm$ 7.1 & 60.0 $\pm$ 5.8 & 48.5 $\pm$ 2.2 & 32.9 $\pm$ 1.2 \\
GraphMAE   & 43.0 $\pm$ 4.6 & 45.7 $\pm$ 3.3 & 49.7 $\pm$ 6.6 & 57.3 $\pm$ 2.1 & 35.9 $\pm$ 1.3 \\
MaskGAE    & 63.0 $\pm$ 4.5 & 58.2 $\pm$ 3.8 & 63.5 $\pm$ 5.0 & 57.9 $\pm$ 2.4 & 41.3 $\pm$ 0.6 \\
S2GAE      & 49.7 $\pm$ 7.9 & 59.6 $\pm$ 4.4 & 60.0 $\pm$ 9.0 & 57.0 $\pm$ 2.0 & 42.5 $\pm$ 2.1 \\
GiGaMAE    & 52.2 $\pm$ 5.0 & 46.5 $\pm$ 2.5 & 56.8 $\pm$ 4.0 & 44.6 $\pm$ 1.2 & 29.1 $\pm$ 1.6 \\
\midrule
\textsc{AlignGAE} & \textbf{80.3 $\pm$ 3.0} & \textbf{87.5 $\pm$ 2.2} & \textbf{87.0 $\pm$ 2.9} & \textbf{62.4 $\pm$ 1.5} & \textbf{44.2 $\pm$ 1.7} \\
\bottomrule
\end{tabular}
\end{table}

We evaluate \textsc{AlignGAE} on a diverse set of benchmark datasets, encompassing homophilous and heterophilous  graphs. Results, summarized in Tables~\ref{tab:exp-results-part1} and \ref{tab:exp-results-part2}, show that \textsc{AlignGAE} consistently outperforms baseline methods across both graph types. We compare \textsc{AlignGAE} against representative methods from three categories: topology-based embeddings (\textit{DeepWalk}), graph contrastive learning (\textit{DGI}, \textit{GRACE}), and graph autoencoders (\textit{GAE}, \textit{MaskGAE}). Topology-based methods excel on homophilous graphs by capturing local structure, where nodes share similar attributes. Contrastive methods learn robust representations by contrasting graph views but falter on heterophilous graphs. Graph autoencoders leverage reconstruction tasks to encode structure and attributes but are constrained by reliance on local patterns, limiting performance on heterophilous graphs. \textsc{AlignGAE} achieves up to 1.94\% improvement over the best baseline on homophilous graphs (Cora, Citeseer) and up to 23.52\% over \textit{MaskGAE} on the heterophilous Texas dataset, consistent with our spectral analysis emphasizing high-frequency signals for heterophilous graphs.

\textsc{AlignGAE}'s superior performance on heterophilous graphs stems from its ability to preserve high-frequency signals, crucial for distinguishing nodes with dissimilar attributes. Dual reconstruction tasks capture both low- and high-frequency components, while NID alignment ensures coherence between structural and attribute embeddings, enabling effective learning of relationships in heterophilous settings. In contrast, methods relying on single views or structure alone suffer from over-smoothing, impairing node differentiation. On homophilous graphs, \textsc{AlignGAE} excels at preserving shared neighbor characteristics. These results demonstrate \textsc{AlignGAE}'s robustness in learning generalizable node representations across diverse graph structures and tasks.

\subsection{Ablation Study}
\label{subsec:ablation}

\begin{table}[t]
\centering
\caption{Ablation Study.}
\label{tab:ablation}
\small 
\setlength{\tabcolsep}{0.5mm} 
\begin{tabular}{lccccc}
\toprule
\textbf{Method} & \textbf{Texas} & \textbf{Cornell} & \textbf{Wisc.} & \textbf{Cora} & \textbf{Cite.} \\
\midrule
w/o $\mathcal{L}_{\text{edge}}$ & 75.7 $\pm$ 4.4 & 65.4 $\pm$ 6.1 & 72.4 $\pm$ 4.5 & 84.0 $\pm$ 0.5 & 71.1 $\pm$ 0.5 \\
w/o $\mathcal{L}_{\text{node}}$ & 86.0 $\pm$ 3.2 & 79.5 $\pm$ 2.8 & 87.1 $\pm$ 2.7 & 50.1 $\pm$ 1.5 & 57.1 $\pm$ 3.6 \\
w/o $\mathcal{L}_{\text{align}}$ & 85.4 $\pm$ 3.5 & 78.1 $\pm$ 3.5 & 86.5 $\pm$ 2.2 & 84.0 $\pm$ 0.4 & 71.7 $\pm$ 0.7 \\
Full Model & \textbf{87.0 $\pm$ 2.9} & \textbf{80.3 $\pm$ 3.0} & \textbf{87.5 $\pm$ 2.2} & \textbf{84.7 $\pm$ 0.5} & \textbf{73.2 $\pm$ 0.5} \\
\bottomrule
\end{tabular}
\end{table}
To validate the theoretical claims in Section Preliminary, we conduct an ablation study on \textsc{AlignGAE}'s core components: edge reconstruction loss ($\mathcal{L}_{\text{edge}}$), node attribute reconstruction loss ($\mathcal{L}_{\text{node}}$), and NID alignment loss ($\mathcal{L}_{\text{align}}$). Node classification is evaluated on five datasets with varying homophily: Texas, Cornell, Wisconsin, Cora, and Citeseer. Results in Table~\ref{tab:ablation} show that omitting $\mathcal{L}_{\text{edge}}$ markedly degrades performance on heterophilous graphs, highlighting its role in capturing low-frequency structural signals. Removing $\mathcal{L}_{\text{node}}$ significantly reduces accuracy on homophilous graphs, underscoring the importance of high-frequency attribute signals. Excluding $\mathcal{L}_{\text{align}}$ leads to moderate accuracy drops, emphasizing its necessity for aligning structural and attribute embeddings.

These results demonstrate that \textsc{AlignGAE}'s superior performance stems from the synergistic interplay of all components. Specifically, $\mathcal{L}_{\text{edge}}$ drives performance on heterophilous graphs, $\mathcal{L}_{\text{node}}$ on homophilous graphs, and $\mathcal{L}_{\text{align}}$ ensures embedding coherence. This integration enables \textsc{AlignGAE} to effectively capture both low- and high-frequency signals, ensuring robustness across diverse graph structures and tasks.

\subsection{Impact of Alignment Losses}
\label{subsec:align-loss}


\begin{table}[t]
\centering
\caption{Node classification accuracy (\%) for different alignment losses.}
\label{tab:align-loss}
\small 
\setlength{\tabcolsep}{0.5mm} 
\begin{tabular}{lccccc}
\toprule
\textbf{Align. Loss} & \textbf{Texas} & \textbf{Cornell} & \textbf{Wisc.} & \textbf{Cora} & \textbf{Cite.} \\
\midrule
$\mathcal{L}_{\text{KL}}$ & 86.5 $\pm$ 4.2 & 79.7 $\pm$ 4.2 & 87.3 $\pm$ 2.9 & 84.2 $\pm$ 0.5 & 71.1 $\pm$ 0.5 \\
$\mathcal{L}_{\text{W}}$ & 88.4 $\pm$ 3.2 & 80.8 $\pm$ 1.9 & 86.9 $\pm$ 2.3 & 83.5 $\pm$ 0.5 & 69.9 $\pm$ 0.6 \\
$\mathcal{L}_{\text{cont}}$ & 81.6 $\pm$ 4.0 & 77.8 $\pm$ 1.6 & 83.9 $\pm$ 1.9 & 84.0 $\pm$ 0.4 & 70.2 $\pm$ 0.8 \\
\bottomrule
\end{tabular}
\end{table}
To assess the impact of alignment losses on \textsc{AlignGAE}'s performance, we compare three loss functions: KL divergence ($\mathcal{L}_{\text{KL}}$), Wasserstein distance ($\mathcal{L}_{\text{W}}$), and contrastive alignment ($\mathcal{L}_{\text{cont}}$). These are evaluated on five benchmark datasets covering heterophilous and homophilous graphs: Texas, Cornell, Wisconsin, Cora, and Citeseer. Results in Table~\ref{tab:align-loss} show that $\mathcal{L}_{\text{W}}$ outperforms others on heterophilous graphs, particularly Texas and Cornell, by effectively addressing divergent structural and attribute distributions. In contrast, $\mathcal{L}_{\text{KL}}$ excels on homophilous graphs like Cora and Citeseer, leveraging similar NIDs to minimize divergence. However, $\mathcal{L}_{\text{cont}}$ performs well on homophilous graphs but struggles with heterophilous ones due to difficulties in aligning dissimilar node features.

These findings emphasize the importance of tailoring alignment losses to graph structure. We adopt $\mathcal{L}_{\text{KL}}$ in main experiments for its computational efficiency and robust performance, while $\mathcal{L}_{\text{W}}$ is preferred for heterophilous graphs with complex distributional mismatches. This analysis underscores the critical role of loss selection and provides insights for enhancing graph representation learning across diverse graph structures.

%% file: conclusion.tex
\section{Conclusion}
\label{sec:conclusion}

This work demonstrates that existing unsupervised graph representation learning methods often act as low-pass filters, discarding high-frequency signals critical for heterophilous graphs and causing spectral incompleteness that limits performance across diverse structures. To address this, we introduce \textsc{AlignGAE}, a dual-view spectral alignment framework that preserves the full frequency spectrum through complementary representation learning. Using neighborhood and node view encoders, \textsc{AlignGAE} captures structural and attribute information, respectively, while Neighborhood Identity Distribution alignment ensures view coherence, as supported by Theorem~\ref{thm:optimal}. Dual reconstruction tasks restore graph topology and node attributes, maintaining spectral completeness. Experiments on nine benchmark datasets show \textsc{AlignGAE} outperforms state-of-the-art methods, with significant gains on heterophilous graphs. 

%% file: appendix.tex
\appendix

\section{Appendix}

\subsection{Appendix A: Theoretical Analysis of Alignment Strategies}
\label{app:alignment}

This appendix provides complete proofs for the theoretical results presented in
Section Preliminary, establishing rigorous connections between
alignment losses and representation quality. We demonstrate how NID alignment ensures the preservation of both
homophilous and heterophilous signals while maintaining semantic consistency
between complementary representation views. These theoretical guarantees hold
across different homophily regimes, explaining \textsc{AlignGAE}'s robust
performance on diverse graph structures.

\begin{theorem}[Restatement of Theorem KL]
Minimizing the KL divergence alignment loss
\[
  \mathcal{L}_{\text{KL}} = \frac{1}{n} \sum_{i \in \mathcal{V}}
  D_{\text{KL}}\!\bigl(p_i^g(\delta) \parallel p_i^m(\delta)\bigr)
\]
maximizes mutual information between the geometric and message-passing views:
\begin{equation}
  \mathcal{L}_{\text{KL}}\to0
  \;\Longrightarrow\;
  \mathrm{MI}(\mathbf{Z}_g;\mathbf{Z}_m)=
  \mathrm{H}(\mathbf{Z}_g)=\mathrm{H}(\mathbf{Z}_m).
  \label{eq:kl_mi}
\end{equation}
\end{theorem}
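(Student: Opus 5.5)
The plan has three stages: turn the vanishing loss into pointwise equality of NIDs, turn equality of NIDs into a deterministic bijection between the two views, and then read off the entropy identities. Throughout, we treat $\mathbf{Z}_g$ and $\mathbf{Z}_m$ as random variables induced by drawing a node $i$ uniformly from $\mathcal{V}$ and outputting $\mathbf{z}_i^g$ and $\mathbf{z}_i^m$. Since $\mathcal{V}$ is finite, both are discrete, so $\mathrm{H}(\cdot)$ is Shannon entropy.

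\emph{From the loss to the NIDs.} Each term $D_{\text{KL}}(p_i^g\parallel p_i^m)$ is nonnegative and there are only $n$ of them. Hence $\mathcal{L}_{\text{KL}}\to 0$ forces every individual term to $0$. By Pinsker's inequality,
\[
\|p_i^g-p_i^m\|_{\mathrm{TV}}\le\sqrt{\tfrac12 D_{\text{KL}}(p_i^g\parallel p_i^m)}\to 0 .
\]
In the limit this gives $p_i^g=p_i^m$ (as measures) for all $i\in\mathcal{V}$, which is the second clause of Theorem~\ref{thm:optimal}. Here the NIDs are understood as computed on the embeddings of each view, with the random-walk approximation of Theorem~\ref{thm:nid} used to pass between empirical and true NIDs, at an error of $\mathcal{O}(1/\sqrt{r})$ that vanishes as $r\to\infty$.

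\emph{From equal NIDs to a deterministic relation.} This is the step I expect to be the main obstacle, and it cannot follow from the KL term alone. Equality of per-node distance distributions only constrains the pairwise distances $\|\mathbf{z}_i-\mathbf{z}_j\|$ over neighbourhoods. It does not force $\mathbf{Z}_g$ to determine $\mathbf{Z}_m$. For example, two views that are isometric on each neighbourhood but globally unrelated have identical NIDs, yet they can have mutual information strictly below the entropies.

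I would therefore make explicit a \emph{NID-injectivity} assumption: within each view, the map $i\mapsto p_i(\delta)$ separates the nodes whose embeddings differ, i.e.\ $\mathbf{z}_i^{v}\neq\mathbf{z}_j^{v}\Rightarrow p_i^{v}\neq p_j^{v}$ for $v\in\{g,m\}$. This is plausible for generic features, and the positional encoding $\mathbf{P}$ supplies exactly the node-distinguishing information it needs. Under this assumption, $\mathbf{z}_i^g=\mathbf{z}_j^g \iff p_i^g=p_j^g \iff p_i^m=p_j^m \iff \mathbf{z}_i^m=\mathbf{z}_j^m$. So the two views induce the same partition of $\mathcal{V}$, and there is a bijection $f$ with $\mathbf{Z}_m=f(\mathbf{Z}_g)$. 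The first thing to try for justifying injectivity is a genericity argument: the set of embedding configurations in which two distinct embeddings share an NID is a finite union of algebraic subvarieties, hence of measure zero.

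\emph{Conclusion.} Given the bijection, $\mathrm{H}(\mathbf{Z}_m\mid\mathbf{Z}_g)=\mathrm{H}(\mathbf{Z}_g\mid\mathbf{Z}_m)=0$. Therefore
\[
\mathrm{MI}(\mathbf{Z}_g;\mathbf{Z}_m)=\mathrm{H}(\mathbf{Z}_g)-\mathrm{H}(\mathbf{Z}_g\mid\mathbf{Z}_m)=\mathrm{H}(\mathbf{Z}_g),
\]
and symmetrically $\mathrm{MI}(\mathbf{Z}_g;\mathbf{Z}_m)=\mathrm{H}(\mathbf{Z}_m)$, which is \eqref{eq:kl_mi}. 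Since the general bound is $\mathrm{MI}\le\min\{\mathrm{H}(\mathbf{Z}_g),\mathrm{H}(\mathbf{Z}_m)\}$, this equality is the maximum, justifying the word ``maximizes''.

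For the limiting statement along training, I would add a continuity argument. The entropies of finitely supported distributions are continuous, and the partition induced by the embeddings stabilizes once the TV gaps fall below the minimum separation between distinct NIDs. Without the injectivity hypothesis, the best one can honestly claim is the weaker conclusion $p_i^g=p_i^m$ for all $i$.
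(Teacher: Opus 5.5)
Your first stage matches the paper's, which simply asserts $p_i^g=p_i^m$ for all $i$ once $\mathcal{L}_{\text{KL}}\to0$. Your diagnosis of the second stage is sharper than the paper's. The paper jumps from equal NIDs to $\mathrm{H}(\mathbf{Z}_g\mid\mathbf{Z}_m)\to0$ with no argument, and that alone would give only $\mathrm{MI}=\mathrm{H}(\mathbf{Z}_g)$, not $\mathrm{H}(\mathbf{Z}_g)=\mathrm{H}(\mathbf{Z}_m)$. You are right that both conditional entropies must vanish and that NID equality cannot force this.

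\textbf{The gap.} Your repair does not close it. Your hypothesis $\mathbf{z}_i^v\neq\mathbf{z}_j^v\Rightarrow p_i^v\neq p_j^v$ is the single implication $p_i^v=p_j^v\Rightarrow\mathbf{z}_i^v=\mathbf{z}_j^v$. Your chain of equivalences also uses the converse, $\mathbf{z}_i^v=\mathbf{z}_j^v\Rightarrow p_i^v=p_j^v$. That converse fails because $p_i^v$ depends on the embeddings of the neighbours of $i$, not on $\mathbf{z}_i^v$ alone.

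\textbf{Counterexample.} Take the path $1\!-\!2\!-\!3\!-\!4$ with scalar embeddings $\mathbf{z}^g=(0,1,3,0)$ and $\mathbf{z}^m=(0,1,3,6)$. The edge lengths are $1,2,3$ in both views. So the NIDs coincide across views: point mass at $1$; uniform on $\{1,2\}$; uniform on $\{2,3\}$; point mass at $3$. Hence $\mathcal{L}_{\text{KL}}=0$. These four NIDs are pairwise distinct, so your hypothesis holds in both views. Yet $\mathrm{H}(\mathbf{Z}_g)=1.5$ bits while $\mathrm{H}(\mathbf{Z}_m)=2$ bits, so the conclusion fails. Your hypothesis only makes each view's embedding partition of $\mathcal{V}$ a coarsening of the common NID partition, and two coarsenings of the same partition need not agree.

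\textbf{What would be needed.} The argument requires the two-sided condition $\mathbf{z}_i^v=\mathbf{z}_j^v\iff p_i^v=p_j^v$ in each view, i.e.\ each embedding is an injective function of the node's NID. That is a strong structural assumption, and nothing in the loss enforces it.

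\textbf{Genericity does not rescue it.} First, for an isolated edge $\{i,j\}$ one always has $p_i^v=p_j^v$, while generic embeddings have $\mathbf{z}_i^v\neq\mathbf{z}_j^v$. So on any graph with an isolated edge, even your one-sided hypothesis fails for almost every configuration. Second, where genericity does hold (all embeddings pairwise distinct in both views), $\mathbf{Z}_g$ and $\mathbf{Z}_m$ are both bijective images of the uniformly drawn node. Then $\mathrm{MI}=\mathrm{H}(\mathbf{Z}_g)=\mathrm{H}(\mathbf{Z}_m)=\log n$ holds trivially, and the alignment loss plays no role.

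Under your formalization the statement is therefore either vacuous or false. The conclusion you can honestly draw from $\mathcal{L}_{\text{KL}}\to0$ is the one you name at the end: $p_i^g=p_i^m$ for all $i$.
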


\begin{proof}
The KL divergence between structural
$p_i^g(\delta)$ and attribute $p_i^m(\delta)$ is
\[
  D_{\text{KL}}\!\bigl(p_i^g(\delta)\,\|\,p_i^m(\delta)\bigr)=
  \int p_i^g(\delta)\,
  \log\frac{p_i^g(\delta)}{p_i^m(\delta)}\,d\delta .
\]
When $\mathcal{L}_{\text{KL}}\to0$, we obtain $p_i^g(\delta)=p_i^m(\delta)$ for
all $i\in\mathcal V$, showing that structural and attribute views encode the
same neighborhood information. Consequently
$\mathrm{H}(\mathbf{Z}_g\!\mid\!\mathbf{Z}_m)\to0$, and
Equation~\eqref{eq:kl_mi} follows.
\end{proof}

\begin{theorem}[Wasserstein Alignment Guarantees]
\label{thm:wasserstein}
Minimizing the Wasserstein alignment loss
\[
  \mathcal{L}_{\text{W}}=
  \frac{1}{n}\sum_{i\in\mathcal V}
  W_1\!\bigl(p_i^g(\delta),p_i^m(\delta)\bigr)
\]
preserves the geometric structure of NID distributions while ensuring spectral
completeness.
\end{theorem}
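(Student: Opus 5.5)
The claim is informal, so the plan first makes it precise and then proves two parts: the geometric one, and a spectral-completeness one. The geometric part we can actually prove is this:
\[
\mathcal{L}_{\text{W}}\to 0 \;\Longrightarrow\; p_i^m \to p_i^g \text{ weakly, with first moments converging, for every } i\in\mathcal V.
\]
In the same regime, $|\mathbb{E}_{p_i^g}[f]-\mathbb{E}_{p_i^m}[f]|\le \mathrm{Lip}(f)\,W_1(p_i^g,p_i^m)$ for every Lipschitz statistic $f$ of the NID. Examples of such statistics are the mean distance to neighbours and the mass near $\delta=0$ after mollification, which separate homophilous from heterophilous nodes. This is what we take ``preserving geometric structure'' to mean.

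The first step is to make the domain explicit. Each NID is an empirical measure on a bounded set $[0,\Delta]$, where $\Delta=\max_{i,j}\|\mathbf{x}_i-\mathbf{x}_j\|_2$, so $W_1$ is a metric there. Since $\mathcal{L}_{\text{W}}$ is an average of nonnegative terms, $\mathcal{L}_{\text{W}}\to0$ forces $W_1(p_i^g,p_i^m)\le n\,\mathcal{L}_{\text{W}}\to 0$ for each node.

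The second step is the Lipschitz bound. It follows at once from Kantorovich--Rubinstein duality:
\[
W_1(\mu,\nu)=\sup_{\mathrm{Lip}(f)\le 1}\bigl(\mathbb{E}_\mu f-\mathbb{E}_\nu f\bigr).
\]
On the compact domain, $W_1$ metrizes weak convergence, and this gives the first claim.

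I would also note a contrast with the KL result proved above. $W_1$ stays finite and informative when the supports of $p_i^g$ and $p_i^m$ are disjoint, which is typical for sharply peaked homophilous NIDs against multimodal heterophilous ones. KL does not. This is the precise sense in which Wasserstein alignment respects the ground metric on $\delta$.

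The third step is spectral completeness, and it is the main obstacle. The alignment loss alone constrains only the distributions of pairwise distances. It cannot force $\|\mathbf{X}-\mathbf{U}\mathbf{U}^\top(\mathbf{Z}_g+\mathbf{Z}_m)\|_F=0$, so the claim must be conditional. My plan is to invoke Theorem~\ref{thm:optimal} together with the reconstruction terms. Because $\mathbf{U}$ is orthonormal, $\mathbf{U}\mathbf{U}^\top=\mathbf{I}$, so spectral completeness reduces to $\mathbf{Z}_g+\mathbf{Z}_m=\mathbf{X}$ (after the decoders' linear read-out). I would then split the error over the two bands:
\[
\|\mathbf{X}-(\mathbf{Z}_g+\mathbf{Z}_m)\|_F^2=\|\mathbf{U}_{\mathcal F_L}^\top(\cdot)\|_F^2+\|\mathbf{U}_{\mathcal F_H}^\top(\cdot)\|_F^2 .
\]
I would argue that the low band is controlled by the edge reconstruction driving $\mathbf{Z}_g$, and the high band by the node reconstruction driving $\mathbf{Z}_m$, with $\mathcal{L}_{\text{node}}\to0$.

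The role of $W_1$ in this step is to show that alignment does not collapse the high band. A vanishing $W_1$ matches the full NID, including its spread, which is a Lipschitz statistic. Hence the variance of neighbour distances in $\mathbf{Z}_m$ equals that in $\mathbf{Z}_g$'s target. That variance is bounded below by $\sum_{k\in\mathcal F_H}\lambda_k|\hat x_k|^2$ through the Dirichlet energy $\mathbf{x}^\top\mathbf{L}\mathbf{x}$.

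Making this last link rigorous is the step I expect to fail without extra assumptions. The Dirichlet energy is a degree-weighted mean of squared neighbour distances, so it is a second-moment, non-Lipschitz functional. I would first try using boundedness on $[0,\Delta]$: on that interval $\delta^2$ is $2\Delta$-Lipschitz, so the $W_1$ bound applies with constant $2\Delta$. If that is not enough, I would state the completeness claim under the explicit hypothesis that the reconstruction losses also vanish.
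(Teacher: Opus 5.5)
Your geometric half is correct. It is the paper's own idea (``the optimal plan $\gamma^\ast$ concentrates on the diagonal'') made quantitative: $W_1(p_i^g,p_i^m)\le n\,\mathcal{L}_{\text{W}}$ for every node, Kantorovich--Rubinstein control of every Lipschitz NID statistic, and weak convergence. The disjoint-support contrast with KL is a real improvement over the paper. One correction: if $p_i^g$ is computed from $\mathbf Z_g$, as the alignment section indicates, then both measures move during training. State the conclusion as $W_1(p_i^g,p_i^m)\to0$, not convergence to a fixed limit, and let $\Delta$ bound neighbour distances in both views, not only in $\mathbf X$.

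The gap is in the completeness half, but not where you put it. Your $2\Delta$-Lipschitz trick already handles $\delta^2$. What fails is the chain after that.
\begin{itemize}
\item Per-node second moments control $\tfrac12\sum_{i,j}A_{ij}\|\mathbf z_i-\mathbf z_j\|^2=\mathrm{tr}\bigl(\mathbf Z^\top(\mathbf D-\mathbf A)\mathbf Z\bigr)$. Because $\mathcal F_H$ is defined by the normalized $\mathbf L$, this is the $\mathbf L$-energy of $\mathbf D^{1/2}\mathbf Z$, not of $\mathbf Z$. It is also a degree-weighted second moment, not a variance.
\item More seriously, a matched energy is a single scalar. It cannot lower-bound the $\mathcal F_H$ mass of $\mathbf Z_m$, since the same energy can sit entirely on eigenvectors with $0<\lambda_k\le1$. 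So ``alignment does not collapse the high band'' does not follow.
\item Even if it did, a non-collapsed high band is not $\mathbf Z_g+\mathbf Z_m=\mathbf X$.
\end{itemize}

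Your fallback does not close the gap either. $\mathcal L_{\text{edge}}\to0$ concerns an MLP classifier on Hadamard products and says nothing about $\mathbf U_{\mathcal F_L}^\top(\mathbf X-\mathbf Z_g-\mathbf Z_m)$. $\mathcal L_{\text{node}}\to0$ is a per-node, scale-invariant cosine condition on a nonlinear decoding of $\mathbf Z_m$ against $\mathbf x_i\oplus\mathbf p_i$. Neither yields the linear identity $\mathbf Z_g+\mathbf Z_m=\mathbf X$. As you rightly note, that identity is all the condition says once $\mathbf U\mathbf U^\top=\mathbf I$. So assuming vanishing reconstruction losses leaves completeness as a bare assumption.

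You will not find the missing lemma in the paper. Its proof asserts $\|\mathbf U_{\mathcal F_H}^\top(\mathbf Z_g-\mathbf Z_m)\|_F^2\le\epsilon$, with $\epsilon$ depending on the Wasserstein distance, and then cites Theorem~\ref{thm:optimal}. That bound is false, because the NIDs see only neighbour distances and are invariant under any isometry of the embedding space. Take $\mathbf Z_m=-\mathbf Z_g$: then $\mathcal L_{\text{W}}=0$, while the left side equals $4\|\mathbf U_{\mathcal F_H}^\top\mathbf Z_g\|_F^2$. This is unbounded under rescaling of $\mathbf Z_g$ whenever $\mathbf U_{\mathcal F_H}^\top\mathbf Z_g\neq0$. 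Even if the bound held, agreement of the two views' high bands is not reconstruction of $\mathbf X$.

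Your diagnosis is therefore right. The honest version of the theorem is the geometric half alone, with spectral completeness stated as a separate hypothesis rather than derived from $\mathcal L_{\text{W}}\to0$.
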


\begin{proof}
The first-order Wasserstein distance is
\[
  W_1\!\bigl(p_i^g(\delta),p_i^m(\delta)\bigr)=
  \inf_{\gamma\in\Gamma(p_i^g,p_i^m)}
  \int|\delta_1-\delta_2|\,d\gamma(\delta_1,\delta_2),
\]
with $\Gamma$ the set of admissible couplings. When
$\mathcal L_{\text W}\to0$, the optimal plan $\gamma^\ast$ concentrates on the
diagonal, aligning feature-distance distributions across views.  Using the
spectral decomposition, one obtains
\[
  \bigl\|
  \mathbf U_{\mathcal F_H}^{\!\top}\,(\mathbf Z_g-\mathbf Z_m)
  \bigr\|_F^2\le\epsilon,
\]
where $\epsilon$ depends on the Wasserstein distance, thus guaranteeing
spectral completeness (Theorem~\ref{thm:optimal}).
\end{proof}

\begin{theorem}[Contrastive Alignment Properties]
\label{thm:contrastive}
The contrastive alignment loss $\mathcal{L}_{\text{cont}}$ enforces
node-wise view consistency while maintaining global discriminability for both
homophilous and heterophilous graphs.
\end{theorem}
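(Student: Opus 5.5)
We make the two informal properties in the statement precise first. Write $s_{ij}=\mathrm{sim}(\mathbf{z}_i^g,\mathbf{z}_j^m)$, with cosine similarity, so that $|s_{ij}|\le 1$. \emph{Node-wise view consistency} will mean that $s_{ii}$ is large. \emph{Global discriminability} will mean that $s_{ii}-s_{ij}$ is bounded below for every $j\in\mathcal{N}(i)$, $j\neq i$. The plan is to analyse $\mathcal{L}_{\text{cont}}$ in Equation~\eqref{eq:contrastive_loss} as an InfoNCE objective over the candidate set $C_i=\mathcal{N}(i)\cup\{i\}$, whose size is $|C_i|=|\mathcal{N}(i)|+1$.

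\textbf{Step 1: consistency and discrimination from the loss value.} For each node, write the per-node term as
\[
\ell_i=\log\Bigl(1+\sum_{j\in\mathcal{N}(i)}e^{(s_{ij}-s_{ii})/\tau}\Bigr).
\]
Each $\ell_i$ is nonnegative, and $\mathcal{L}_{\text{cont}}=\frac1n\sum_i\ell_i$. So $\mathcal{L}_{\text{cont}}\le\epsilon$ forces $\ell_i\le n\epsilon$ for every $i$. That in turn gives, for every $j\in\mathcal{N}(i)$,
\[
s_{ii}-s_{ij}\ge -\tau\log\bigl(e^{n\epsilon}-1\bigr).
\]
The right-hand side tends to $+\infty$ as $\epsilon\to0$. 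Since $|s_{ij}|\le1$, this cannot happen at small $\epsilon$, so the infimum of the loss is not attained. At the infimum, however, $s_{ii}\to1$ and $s_{ij}\to-1$. This is the consistency-plus-discrimination statement. A simpler version that avoids limits is: if $s_{ii}-s_{ij}\ge\Delta$ for all neighbours, then
\[
\ell_i\le\log\bigl(1+|\mathcal{N}(i)|\,e^{-\Delta/\tau}\bigr).
\]
The converse bound comes from the same inequality.

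\textbf{Step 2: mutual information.} Invoke the standard InfoNCE bound
\[
\mathrm{MI}(\mathbf{Z}_g;\mathbf{Z}_m)\ge \log|C_i|-\mathcal{L}_{\text{cont}},
\]
averaged over nodes. It is valid when negatives are drawn from the marginal. Here the negatives are the neighbours, so we state it conditionally on the graph structure. This links the property to the MI part of Theorem~\ref{thm:optimal}.

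\textbf{Step 3: both regimes.} In a homophilous graph, $\mathbf{x}_j\approx\mathbf{x}_i$ for neighbours, so $\mathbf{z}_j^m$ is close to $\mathbf{z}_i^m$ and discrimination is hard. The loss then pushes $\mathbf{Z}_m$ to separate nodes using the positional encoding $\mathbf{P}$, which is informative by Theorem~\ref{thm:nid}. In a heterophilous graph, the negatives are already dissimilar in attributes, so discrimination is easy and the loss mainly enforces consistency.

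\textbf{Main obstacle.} Because the negatives are neighbours only, the discrimination we obtain is local (within $C_i$), not truly global. Before finalising the proof, the statement should be weakened to neighbourhood-level discriminability, unless it can be extended by connectivity. The extension I would try first is transitivity: if $\mathbf{z}_i^g$ is near $\mathbf{z}_i^m$, and $\mathbf{z}_i^g$ is far from $\mathbf{z}_j^m$, then consistency of node $j$ gives that $\mathbf{z}_i^g$ is also far from $\mathbf{z}_j^g$. That yields separation within the $\mathbf{Z}_g$ view along edges only, and it still does not give global separation.
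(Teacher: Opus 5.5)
Your Step~1 is the paper's argument made quantitative. The paper's sketch writes $\mathcal{L}_{\text{cont}}=-\frac1n\sum_i\log q_i$, where $q_i=e^{-\ell_i}$ is the softmax weight of the positive pair over $\mathcal{N}(i)\cup\{i\}$. It then reads off from $\mathcal{L}_{\text{cont}}\to0$ that $q_i\to1$, i.e.\ the positive dominates every negative. Your bound $s_{ii}-s_{ij}\ge-\tau\log(e^{n\epsilon}-1)$ and its converse say exactly this with constants. Your remark that, for cosine similarity and fixed $\tau$, $\mathcal{L}_{\text{cont}}\ge\frac1n\sum_i\log\bigl(1+|\mathcal{N}(i)|e^{-2/\tau}\bigr)>0$ is correct. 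So the paper's hypothesis $\mathcal{L}_{\text{cont}}\to0$ is attainable only with an unbounded $\mathrm{sim}$ or $\tau\to0$ (the paper never fixes $\mathrm{sim}$), and your margin formulation is the right replacement.

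Your ``main obstacle'' is genuine. It is a defect of the statement and of the paper's sketch, which asserts ``global separability'' without argument, and transitivity cannot repair it. Take a bipartite graph with sides $A,B$ and no isolated nodes, fix a unit vector $u$, and put $\mathbf{z}_i^g=\mathbf{z}_i^m=u$ for $i\in A$ and $-u$ for $i\in B$. Then $s_{ii}=1$ and $s_{ij}=-1$ on every edge, so each $\ell_i$ equals its lower bound $\log(1+|\mathcal{N}(i)|e^{-2/\tau})$. This is a global minimiser of $\mathcal{L}_{\text{cont}}$, yet all of $A$ collapses to one point.

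The same happens under the paper's own hypothesis. Take a proper colouring $\chi$, assign distinct unit vectors $v_k$ to the colours, and set $\mathbf{z}_i^m=v_{\chi(i)}$, $\mathbf{z}_i^g=Mv_{\chi(i)}$ with dot-product similarity. Then $s_{ii}-s_{ij}=M\bigl(1-v_{\chi(i)}^\top v_{\chi(j)}\bigr)\to\infty$ on every edge, so $\mathcal{L}_{\text{cont}}\to0$ while all nodes of one colour coincide. The provable content is therefore exactly the neighbourhood-level margin. Your opening definition of ``global discriminability'' already is that local property, so say explicitly that you are proving the weakened theorem.

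Three smaller corrections.

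(i) ``The infimum of the loss is not attained'' and ``at the infimum $s_{ii}\to1$, $s_{ij}\to-1$'' are both false as written. With cosine similarity the loss depends only on directions, so it is continuous on a product of spheres and its positive minimum is attained. That minimum realises $s_{ii}=1$, $s_{ij}=-1$ on all edges only when every component is bipartite. Those equalities force the common direction $d_i$ of $\mathbf{z}_i^g,\mathbf{z}_i^m$ to satisfy $d_j=-d_i$ along each edge, which is impossible on an odd cycle. What you mean is that $0$ is not approached; your margin bound is unaffected.

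(ii) Step~2 is not justified. The InfoNCE bound needs negatives drawn from the marginal of the second view, independently of the anchor. Neighbour negatives are determined by the anchor's index and are correlated with it under homophily; conditioning on the graph does not restore independence, and $\log|C_i|$ varies with $i$. The statement makes no MI claim, so drop this step or label it heuristic.

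(iii) The ``both regimes'' clause already follows from Step~1, which never uses homophily; the rest of Step~3 is heuristic. Theorem~\ref{thm:nid} says nothing about $\mathbf{P}$ separating adjacent nodes, and the Skip-gram objective~\eqref{eq:skipgram} actually pulls co-occurring nodes together. Moreover, $\mathbf{z}_i^m$ is a node-wise function of $(\mathbf{x}_i,\mathbf{p}_i)$. Adjacent nodes with identical inputs therefore give $s_{ij}=s_{ii}$, hence $\ell_i\ge\log 2$ for every parameter setting. So what the loss ``pushes'' the encoder to do is limited by the architecture, not only by the loss.
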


\paragraph{Proof Sketch.}
Rewriting the InfoNCE loss reveals that
$\mathcal L_{\text{cont}}\to0$ implies
$q_i\!\to\!1$ for every node, where $q_i$ measures relative similarity between
positive and negative pairs. This yields local consistency and global
separability; full derivations appear in the supplementary material.

\subsection{Appendix B: Spectral Analysis of Dual-Encoder Architecture}
\label{app:spectral}

We analyze how the dual-encoder preserves the full frequency spectrum.  The
neighborhood view encoder (MPGNN) acts as a low-pass filter:
\[
  \mathbf Z_g=\mathbf U\,g(\mathbf\Lambda)\,\mathbf U^\top\mathbf X,
\]
with $g(\lambda_k)\approx0$ for $\lambda_k>\lambda^\ast$.  The node view
encoder retains high-frequency information:
\[
  \mathbf Z_m=\mathbf U\,h(\mathbf\Lambda)\,\mathbf U^\top\mathbf X,
\]
where $h(\lambda_k)$ remains significant when $\lambda_k>\lambda^\ast$.  Their
sum satisfies
\[
  \mathbf Z=
  \mathbf U\,\bigl(g(\mathbf\Lambda)+h(\mathbf\Lambda)\bigr)\mathbf U^\top\mathbf X.
\]
NID alignment promotes
$g(\lambda_k)+h(\lambda_k)\approx1$ for all $k$, so
\[
  \|\mathbf X-\mathbf U\mathbf U^\top\mathbf Z\|_F^2\to0,
\]
achieving spectral completeness.

\subsection{Appendix C: Positional Encoding Analysis}
\label{app:position}

\begin{theorem}[Restatement of Theorem~\ref{thm:nid}]
Let $\mathbf P\in\mathbb R^{n\times k}$ be the positional encoding derived from
$r$ random walks per node. The empirical NID $\hat p_i(\delta)$ approximates
the true NID $p_i^g(\delta)$ with
\[
  \mathbb E_{\mathcal W_i}
  \Bigl[\,
  \bigl\|p_i^g(\delta)-\hat p_i(\delta)\bigr\|_{\mathrm{TV}}
  \Bigr]
  \;\le\;
  \mathcal O\!\bigl(r^{-1/2}\bigr).
\]
\end{theorem}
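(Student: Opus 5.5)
The plan is to make the statement precise enough to be provable and then reduce it to a standard concentration bound for empirical distributions.

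\textbf{Setup.} Since the NID $p_i^g$ is a discrete measure, a finite mixture of point masses, I would formalize it as
\[
p_i^g=\frac{1}{|\mathcal N(i)|}\sum_{j\in\mathcal N(i)}\delta_{d_{ij}},\qquad d_{ij}=\|\mathbf x_i-\mathbf x_j\|_2 .
\]
Its support is the finite set $S_i=\{d_{ij}: j\in\mathcal N(i)\}$, with $|S_i|\le |\mathcal N(i)|$. For the estimator, let $w_1,\dots,w_r$ be the $r$ independent walks from $v_i$, and let $J_t$ be the first step of walk $t$. Under the uniform simple random walk, $J_t$ is uniform on $\mathcal N(i)$, and the $J_t$ are i.i.d. across walks. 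The empirical NID is then
\[
\hat p_i=\frac{1}{r}\sum_{t=1}^r\delta_{d_{iJ_t}} .
\]
Only this first-hop marginal of the trajectories enters $\hat p_i$. The learned encoding $\mathbf P$ is merely a function of the same walks, so the bound is really a statement about the walk statistics; I would note that the Skip-gram co-occurrence probabilities are consistent estimators of these visit frequencies.

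\textbf{Main bound.} With this setup, $p_i^g$ and $\hat p_i$ live on the common finite set $S_i$, and
\[
\|p_i^g-\hat p_i\|_{\mathrm{TV}}=\tfrac12\sum_{s\in S_i}|\hat q_s-q_s| ,
\]
where $q_s=p_i^g(\{s\})$ and $\hat q_s$ is the fraction of walks whose first step lands on a neighbor at distance $s$. Each $r\hat q_s$ is $\mathrm{Binomial}(r,q_s)$. Jensen's inequality gives
\[
\mathbb E|\hat q_s-q_s|\le\sqrt{q_s(1-q_s)/r}.
\]
Summing over $s$ and applying Cauchy--Schwarz yields
\[
\mathbb E\|p_i^g-\hat p_i\|_{\mathrm{TV}}\le \frac{1}{2}\sqrt{\frac{|S_i|}{r}}\le\frac12\sqrt{\frac{d_{\max}}{r}} .
\]
This is $\mathcal O(r^{-1/2})$ with the constant depending on the maximum degree $d_{\max}$, treated as fixed.

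\textbf{Expected obstacle.} The hard part is the interpretation, not the calculation. If $\hat p_i$ is meant to use all positions of each length-$l$ walk rather than only the first step, the samples within a walk are dependent and are not drawn from $p_i^g$ but from the $l$-step visitation law. In that case I would do two things. First, redefine the target as the walk-weighted NID, or restrict to the first hop, to remove the bias. Second, treat each walk as a single i.i.d. block: the walk-level averages are bounded in $[0,1]$, so the same binomial/Hoeffding argument goes through with variance at most $1/r$ per atom.

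The Dirac-delta formulation also needs care, since TV between two singular measures with different atoms is maximal. That is exactly why I would insist on the common finite support $S_i$ rather than smoothing.
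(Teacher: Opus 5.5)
Your argument is correct, but it takes a different route from the paper. The paper's sketch assumes the walk length $l$ exceeds the mixing time $\tau_{\text{mix}}$ and then invokes the Dvoretzky--Kiefer--Wolfowitz inequality. You instead use only the first hop of each walk, which gives exact i.i.d.\ draws from $p_i^g$ with no assumption on $l$, and you bound the total variation directly (binomial variance, Jensen, Cauchy--Schwarz), with the explicit constant $\tfrac12\sqrt{|S_i|/r}$. What DKW would buy is a distribution-free constant, but it controls only the Kolmogorov distance $\sup_x|\hat F(x)-F(x)|$, not total variation. On $|S_i|$ atoms, each atom's mass error is at most twice the Kolmogorov distance, so converting gives $\mathcal O(|S_i|/\sqrt r)$, which is worse than your $\mathcal O(\sqrt{|S_i|/r})$. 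Some dependence on $|S_i|$ is in fact unavoidable in TV. If the neighborhood distances are distinct and equally weighted and $r<|S_i|$, the empirical measure charges at most $r$ atoms, so $\|p_i^g-\hat p_i\|_{\mathrm{TV}}\ge 1-r/|S_i|$. Hence no degree-free constant is possible, and the distribution-free character of DKW cannot carry over to TV. Your obstacle paragraph also identifies the weak spot of the mixing-time hypothesis. Walk positions beyond $\tau_{\text{mix}}$ follow the stationary law rather than the one-hop NID, which introduces a bias that does not vanish as $r\to\infty$. Your first-hop formalization is therefore the natural reading under which the stated bound actually holds. As in the paper, $\mathbf P$ plays no role in the bound. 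Your aside that Skip-gram probabilities consistently estimate the visit frequencies is unnecessary and would need its own justification, since a rank-$d_p$ softmax model cannot reproduce arbitrary co-occurrence frequencies; you can simply drop it.
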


\begin{proof}[Proof Sketch]
If the random-walk length $l$ exceeds the mixing time
$\tau_{\text{mix}}$ of the transition matrix, the
Dvoretzky–Kiefer–Wolfowitz inequality bounds the total-variation error by
$\mathcal O(r^{-1/2})$, so even $r\!=\!10\!-\!20$ walks yield accurate NID
estimates—consistent with empirical results.
\end{proof}
